\newtheorem{lemma}{Lemma}
\newtheorem{proposition}{Proposition}
\newtheorem{theorem}{Theorem}
\newtheoremstyle{named}{}{}{\itshape}{}{\bfseries}{.}{.5em}{#1 #3}
\theoremstyle{named}
\newtheorem*{namthm*}{Theorem}
\DeclareMathOperator{\sign}{sign}
\newcommand{\dc}{\mathcal{DC}}
\newcommand{\pldc}{\textrm{pl-}\mathcal{DC}}
\icmltitlerunning{Piecewise Linear Regression via a Difference of Convex Functions}
\newcommand{\cmark}{\ding{51}}%
\newcommand{\xmark}{\ding{55}}%
\begin{document}

\twocolumn[
\icmltitle{Piecewise Linear Regression via a Difference of Convex Functions}

% It is OKAY to include author information, even for blind
% submissions: the style file will automatically remove it for you
% unless you've provided the [accepted] option to the icml2019
% package.

% List of affiliations: The first argument should be a (short)
% identifier you will use later to specify author affiliations
% Academic affiliations should list Department, University, City, Region, Country
% Industry affiliations should list Company, City, Region, Country

% You can specify symbols, otherwise they are numbered in order.
% Ideally, you should not use this facility. Affiliations will be numbered
% in order of appearance and this is the preferred way.
\icmlsetsymbol{equal}{*}

\begin{icmlauthorlist}
\icmlauthor{Ali Siahkamari \textsuperscript{*}}{to}
\icmlauthor{Aditya Gangrade \textsuperscript{*}}{two}
\icmlauthor{Brian Kulis}{to}
\icmlauthor{Venkatesh Saligrama}{to}
%\icmlauthor{Fiuea Rrrr}{to}
\end{icmlauthorlist}

\icmlaffiliation{to}{Department of Electrical and Computer Engineering, Boston University}
\icmlaffiliation{two}{Division of Systems Engineering, Boston University}

\icmlcorrespondingauthor{Ali Siahkamari}{siaa@bu.edu}
%icmlcorrespondingauthor{Eee Pppp}{ep@eden.co.uk}

% You may provide any keywords that you
% find helpful for describing your paper; these are used to populate
% the "keywords" metadata in the PDF but will not be shown in the document
%\icmlkeywords{Machine Learning, ICML}
\vskip 0.3in
]

% this must go after the closing bracket ] following \twocolumn[ ...

% This command actually creates the footnote in the first column
% listing the affiliations and the copyright notice.
% The command takes one argument, which is text to display at the start of the footnote.
% The \icmlEqualContribution command is standard text for equal contribution.
% Remove it (just {}) if you do not need this facility.
%\epstopdfsetup{outdir=./}
%  \printAffiliationsAndNotice{}  % leave blank if no need to mention equal contribution
\printAffiliationsAndNotice{\icmlEqualContribution} % otherwise use the standard text.

\begin{abstract}
     
     We present a new piecewise linear regression methodology that utilizes fitting a \emph{difference of convex} functions (DC functions) to the data. These are functions $f$ that may be represented as the difference $\phi_1 - \phi_2$ for a choice of \emph{convex} functions $\phi_1, \phi_2$. The method proceeds by estimating piecewise-liner convex functions, in a manner similar to max-affine regression, whose difference approximates the data. The choice of the function is regularised by a new seminorm over the class of DC functions that controls the $\ell_\infty$ Lipschitz constant of the estimate. The resulting methodology can be efficiently implemented via Quadratic programming \emph{even in high dimensions}, and is shown to have close to minimax statistical risk. We empirically validate the method, showing it to be practically implementable, and to have comparable performance to existing regression/classification methods on real-world datasets. 
     % Difference of convex functions (D.L.C) are a rich class of functions that are shown to represent any $C^{2}$ or piece-wise linear function. We motivate the use of D.L.C functions by the similarity of their objective to smoothing splines, with the further benefit of ease of extension to higher dimensions. We provide algorithms to do structural risk minimization over D.L.C functions equipped with confidence bounds at prediction time as convex optimization problems.  We further show ReLU-Nets are D.L.C and vice versa. We show the practicality of our approach in regression, classification and active learning tasks.
\end{abstract}

\section{Introduction}\label{sec:intro}

%This paper is concerned with the nonparametric regression model in high dimensions. The basic goal is to model the dependence of a response variable, $y \in \mathbb{R}$, to a predictor $x \in \mathbb{R}^d,$ under the assumption that there exists a function $f$ such that $y = f(x) + \varepsilon_i$, where $\varepsilon_i$ is a square integrable centered random variable, given access to data $\{(x_i, y_i)\}$ generated in such a manner. 

The multivariate nonparametric regression problem is a fundamental statistical primitive, with a vast history and many approaches. We adopt the following setup: given a dataset, $\{(\bm x_i, y_i)\}_{i \in [n]}$, where $\bm x_i \in \mathbb{R}^d$ are predictors, assumed drawn i.i.d.~from a law $P_X,$ and $y_i$ are responses such that $y_i = f(\bm x_i) + \varepsilon_i,$ for a bounded, centered, independent random noise $\varepsilon_i$, and bounded $f$, the goal is to recover an estimate $\hat{f}$ of $f$ such that on new data, the squared error $\mathbb{E}[(y- \hat{f}(\bm x))^2]$ is small. 

The statistical challenge of the problem lies in the fact that $f$ is only weakly constrained - for instance, $f$ may only be known to be differentiable, or Lipschitz. In addition, the problem is algorithmically challenging in high-dimensions, and many approaches to the univariate problem do not scale well with the dimension $d$. For instance, piecewise linear regression methods typically involve a prespecified grid, and thus the number of grid points, or knots, grows exponentially with dimension. Similarly, methods like splines typically require both stronger smoothness guarantees and exponentially more parameters to fit with dimension in order to avoid singularities in the estimate.

This paper is concerned with regression over the class of functions that are \emph{differences of convex} functions, i.e., DC functions. These are functions $f$ that can be represented as $f = \phi_1 - \phi_2$ for a choice of two \emph{convex} functions. DC functions constitute a very rich class - for instance, they are known to contain all $\mathcal{C}^2$ functions. Such functions have been applied in a variety of contexts including non-convex optimization \cite{yuille2002concave, horst1999dc}, sparse signal recovery \cite{gasso2009recovering} and reinforcement learning \cite{piot2014difference}.

The principal contribution of this paper is a method for piecewise linear regression over the class of DC functions. At the heart of the method is a representation of piecewise linear DC functions via a set of linear constraints, in a manner that generalises the representations used for max-affine regression. The choice of the function is regularised for smoothness by a new seminorm that controls the $\ell_\infty$-Lipschitz constant of the resulting function. The resulting estimate is thus a piecewise linear function, represented as the difference of two piecewise linear convex functions, that are smooth in the sense of having bounded gradients.

The method enjoys two main advantages:
\begin{enumerate}[wide]
    \item It is agnostic to any knowledge about the function, and requires minimal parameter tuning.
    \item It can be implemented efficiently, via quadratic programming, even in high dimensions. For $n$ data points in $\mathbb{R}^d$,  the problems has $2n(2d+1) + 1$ decision variables, and $n^2$ linear constraints, and can be solved in the worst case in  $O(d^2 n^5)$ time by interior-point methods. Furthermore the algorithm does not need to specify partitions for piece-wise linear parts and avoids ad-hoc generalizations of splines or piece-wise linear methods to multi-dimensions.
    %\item It could utilize any combination of convex loss functions. For example for a L1 or hinge loss the the optimization will result in a linear program.
\end{enumerate} In addition, the method is shown to be statistically viable, in that it is shown to attain vanishing risk as the sample size grows at a non-trivial rate, under the condition that the ground truth has bounded DC seminorm. The risk further adapts to structure such as low dimensional supports. %Further, in addition to estimating values, it is possible to extend the method to give high probability bounds on the value of the function at unobserved points.

Lastly, the solution obtained is a piecewise-linear fit, and thus enjoys interpretability in that features contribution heavily to the value can be readily identified. Further, the fitting procedure naturally imposes $\ell_1$ regularisation on the weights of the piecewise linear parts, thus enforcing a sparsity of local features, which further improves interpretability.

To establish practical viability, we implement the method on a number of regression and classification tasks. The settings explored are those of moderate data sizes - $n \le 10^3$, and dimensions $d \le 10^2$. We note that essesntially all non-parametric models are only viable in these settings - typical computational costs grow with $n$ and become infeasible for large datasets, while for much higher dimensions, the sample complexity - which grows exponentially with $d$ - cause small datasets to be non-informative. More pragmatically, all nonparametric methods we compare against have been evaluated on such data. Within these constraints, the method is shown to have better error performance and fluctuation with respect to popular methodologies such as multivariate adaptive regression splines, nearest neighbour methods, and two-layer perceptrons, evaluated on both synthetic and real world data-sets.

\subsection{Connections to Existing Methodologies}\label{sec:related}

\textbf{Piecewise Linear Regression}  is popular since such regressors can can model the local features of the data without affecting the global fit. In higher than $1$ dimensions, piecewise linear functions are usually fit via choosing a partition of the space and fitting linear functions on each part. The principle difficulty thus lies in choosing these partitions. The approach is usually a rectangular grid - for instance, a variable rectangular partition of the space is studied in \cite{toriello2012fitting} and solved optimally. However the rectangulization becomes prohibitive in high dimension as the number of parts grow exponentially with the dimension. Other approaches include Bayesian methods such as \cite{holmes2001bayesian}, which rely on computing posterior means for the parameters to be fit via MCMC methods.%Our method can be utilized to yield piecewise linear approximants without relying on a fitting according to such a partition, thus avoiding this limitation. % \note{Flesh this point out.}%Authors in \cite{holmes2001bayesian} propose a Bayesian model of the form $f(\bm x) = a_0 + \sum_{j=1}^k a_k \langle \bm x, \bm \mu_k \rangle_+$ and compute the posterior mean values for the parameters using MCMC sampling.

\textbf{Max-Affine Regression} is a nonparametric approximation to convex regression, originating in \cite{hildreth1954point, holloway1979estimation} that recovers the optimal piece-wise linear approximant to a convex function with the form $f = \max_{i\in [K]} \langle  a_i, \bm x_i \rangle + b_i$ for some $K$. Smoothness of the estimate can be controlled by constraining the convex function to be Lipschitz. The problem is generic in that it is easily argued that piecewise linear convex functions can uniformly approximate any Lipschitz convex function on a bounded domain. Parametric approaches, i.e., with a fixed $K$, are popular, but can be computationally intensive due to the induced combinatorics of which of the $K$ planes is maximised at which data point, and various heuristics and partitioning techniques have to be applied \cite{magnani2009convex, hannah2013multivariate, ghosh2019max}. The nonparametric case, where $K$ grows with $n$, has been extensively analysed in the works \cite{balazs2015near, balazs2016convex}.

On the other hand, if $K = n,$ i.e.~if  the number of affine functions used is the same as the number of datapoints, then the problem becomes amenable to convex programming techniques - when estimating the parameters $a_i, b_i,$ one can remove the nonlinearity induced by the max, and instead enforce the same via $n$ linear constraints. This simple fact allows efficient algorithmic approaches to max-affine functions. The heart of our method for DC function regression is an extension of this trick to DC functions. %Authors in \cite{ kripfganz1987piecewise, ovchinnikov2002max} show every piecewise linear function, i.e., \textcolor{red}{not necessarily convex}, may be represented as difference of two convex piecewise linear functions. \textcolor{red}{As such, piecewise linear functions are a rich class capable of representing arbitrary Borel-measurable functions.} The main motivation of our approach in this paper is a combination of the max affine approach with these observations, and thus to learn a difference of two piece-wise linear convex functions optimally, which results in learning a general piecewise linear function. 

\textbf{Smoothing splines} are an extremely popular regression methodology in low dimensions. The most popular of these are the $L_2$ smoothing splines, which, in one dimension, involve fixing a `knot' at each data point, and estimating the gradients of the function at each point, with regularisation of the form $\int |\hat{f}''|^2$. Unfortunately this $L_2$ regularisation leads to singularities in $d \ge 3$ dimensions, and methods such as thin plate splines generalising these to higher dimensions resort of regularising up to the $d$th derivative of the estimate, leading to an explosion in the number of parameters to be estimated \cite{wahba1990spline}. 

Our method is closer in relation to $L_1$ regularised splines, which in the univariate case regularise $\int |\hat{f}''|$ - it is shown in Appx.~\ref{appx:spline} that in one dimension our method reduces to these. As a consequence, one may view this method as a new generalisation of the $L_1$-spline regressor. 

A number of alternative methods for multivariate splines have been proposed, with several, such as general additive models modelling the data via low dimensional projections and assumptions. The most relevant multivariate spline methods are the adaptive regression splines, originating in \cite{friedman1991multivariate}, which is a greedy procedure for recursively refining a partition of the data, and fitting new polynomials over the parts.%\note{rewrite}

\textbf{Previous DC Modeling} Finally, let us mention that the final chapter of the doctoral thesis of \citet{balazs2016convex} anticipates our study of DC function regression, but gives neither algorithms nor analyses. Subsequently, \citet{cui2018composite} introduces the same DC modeling as us in a broader context, where the loss function can also be a DC function. However their problem ends up being non-convex. They focus on developing a majorization-minimization algorithm to find an approximate solution with desirable guarantees such as convergence to a directional stationary solution.

%\note{The stuff below talks mainly about $L_2$ splines, but there's a fair bit of work on $L_1$ regularised splines - for instance see \cite{LAI2007196} and connected nodes in the citation network. I still haven't seen a high-diemensional exploration of these, so probably the rest of the criticism remains the same.}
%\noteali{Thanks for pointing this out. I took a brief look. it looks like they are minimizing the regularization over space of piece-wise polynomials. For 1-d at least, they don't show that minimum over all functions (of bounded variation) occurs for piece-wise linear, we we have this}
\section{A brief introduction to DC functions}\label{sec:dc_background}

Difference of convex functions are functions which can be represented as difference of two continuous convex function over a domain $\Omega \subset \mathbb{R}^d$, i.e., the class \begin{align*}
    \mathcal{DC}(\Omega)  \triangleq \{f: \Omega \to \mathbb{R} \,|  \exists &\phi_1,\phi_2 \textrm{ convex, continuous s.t.}\\ &f = \phi_1 - \phi_2 \}.
\end{align*} 
Throughout the text we will set $\Omega = \{ \bm x : \| \bm x \|_\infty \le R\}$. We will assume that the noise and the ground truth function are bounded so that $|\varepsilon|, \sup_{\bm x \in \Omega} |f(\bm x)| \le M.$ As a consequence, $|y| \le 2M$. 

One of the first characterizations of DC functions is due to \citet{hartman1959functions}: a univariate function is a DC function if and only if its directional derivatives each has bounded total variation on all compact subsets of $\Omega$. For higher dimensions it is known that DC functions are a subclass of locally-Lipschitz functions and include $C^{2}$ functions. Therefore, any continuous function can be uniformly approximated by DC functions. For a recent review see \citet{bacakdifference}. 

In the following section we show that D.C functions can fit any sample data. Thus, to allow a bias-variance tradeoff, we regularise the selection of DC functions via the \emph{DC seminorm}
\begin{align*}
    \| f\| \triangleq \inf_{\phi_1,\phi_2}&\sup_{\bm x} \sup_{\bm v_i \in \partial_* \phi_i(\bm x)} \|\bm v_1\|_1 + \|\bm v_2\|_1 \\
    & \textrm{s.t. } \phi_1, \phi_2 \text{ are convex, } \phi_1 - \phi_2 = f,
\end{align*}

where $\partial_*\phi_i$ denotes the set of subgradients of $\phi_i$. The above function is not a norm because every constant function satisfies $\|c\| = 0$. Indeed, if we equate DC functions up to a constant, then the above seminorm turns into a norm. We leave a proof of the fact that the above is a seminorm to Appx.~\ref{appx:seminorm}. Note that the DC seminorm offers strong control on the $\ell_\infty$-Lipschitz constant of the convex parts of at least one DC representation of the function (and in turn on the Lipschitz constant of the function).

We will principally be interested in DC functions with bounded DC seminorm, and thus define \[ \dc_L \triangleq \{ f \in \dc: \|f\| \le L\}.\]

The bulk of this paper concentrates on \emph{piecewise linear DC functions}. This is justified because piecewise linear functions are known to uniformly approximate bounded variation functions, and structural results \cite{ kripfganz1987piecewise, ovchinnikov2002max} showing that \emph{every} piecewise linear function may be represented as difference of two convex piecewise linear functions - i.e., max-affine functions. Since the term is used very often, we will abbreviate ``piecewise linear DC'' as PLDC, and symbolically define \begin{align*}
    \pldc &\triangleq \{ f \in \dc: f\textrm{ is piecewise linear} \},\\
    \pldc_{L} &\triangleq \{ f \in \dc_{L}: f\textrm{ is piecewise linear} \}.
\end{align*}
The following bound on the seminorm of PLDC functions is useful. The proof is obvious, and thus omitted. \begin{proposition}\label{prop:pldc-norm-bound}
Every $f \in \pldc$ can be represented as a difference of two max-affine functions \[f(\bm x) = \max_{k \in [K]} \langle \bm a_k, \bm x \rangle + c_k - \max_{k \in [K]} \langle \bm b_k, \bm x \rangle + c_k'\] for some finite $K$. For such an $f$, $\|f\| \le \max_{k} \|\bm a_k\|_1  + \max_{k} \|\bm b_k\|_1$.
\end{proposition}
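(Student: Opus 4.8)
The plan is to treat the two assertions in turn: the existence of a difference-of-two-max-affine representation with a common number of pieces $K$, and then the seminorm bound for any such representation.

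For the representation, I would first note that a function in $\pldc$ is in particular a continuous piecewise linear function. The structural results of \cite{kripfganz1987piecewise, ovchinnikov2002max} then supply convex piecewise linear (i.e.\ max-affine) functions $\psi_1,\psi_2$ with $f = \psi_1 - \psi_2$, say $\psi_1 = \max_{k\in[K_1]}\langle\bm a_k,\bm x\rangle + c_k$ and $\psi_2 = \max_{k\in[K_2]}\langle\bm b_k,\bm x\rangle + c_k'$. To obtain the single index set $[K]$ appearing in the statement I would set $K = \max(K_1,K_2)$ and pad the shorter representation by repeating one of its affine pieces the required number of times; a repeated piece never changes a pointwise maximum, so neither $f$ nor the collections $\{\bm a_k\}$, $\{\bm b_k\}$ are affected.

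For the bound, fix such a representation $f = \phi_1 - \phi_2$ with $\phi_1 = \max_{k\in[K]}\langle\bm a_k,\bm x\rangle + c_k$ and $\phi_2 = \max_{k\in[K]}\langle\bm b_k,\bm x\rangle + c_k'$. Since $\phi_1,\phi_2$ are a valid pair of convex continuous functions with $\phi_1 - \phi_2 = f$, they are admissible in the infimum defining $\|f\|$, so it suffices to bound the corresponding quantity $\sup_{\bm x}\sup_{\bm v_i\in\partial_*\phi_i(\bm x)}(\|\bm v_1\|_1 + \|\bm v_2\|_1)$. Here I would invoke the standard subdifferential calculus for a finite pointwise maximum of affine functions: $\partial_*\phi_1(\bm x)$ is the convex hull of $\{\bm a_k : k \text{ attains the max at } \bm x\}$, and similarly for $\phi_2$. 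Thus every $\bm v_1 \in \partial_*\phi_1(\bm x)$ is a convex combination of some of the $\bm a_k$, and convexity of $\|\cdot\|_1$ gives $\|\bm v_1\|_1 \le \max_k\|\bm a_k\|_1$, and likewise $\|\bm v_2\|_1 \le \max_k\|\bm b_k\|_1$. The supremum over $(\bm v_1,\bm v_2)$ decouples at each fixed $\bm x$, so taking it and then the supremum over $\bm x$ gives the bound $\max_k\|\bm a_k\|_1 + \max_k\|\bm b_k\|_1$; the infimum over all representations can only lower the left side, which is exactly the claim.

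I do not expect a genuine obstacle here, which is presumably why the authors call the proof obvious. The only two points needing a moment's care are recalling that $\partial_*$ of a finite max of affine functions is exactly the convex hull of the active gradients (the max rule / Danskin's theorem), and observing that the chosen max-affine pair is merely one feasible point in the infimum defining the seminorm and hence yields only an upper bound. Everything else is the convexity of $\ell_1$ together with bookkeeping over the index sets.
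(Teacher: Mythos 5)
Your proof is correct and is essentially the ``obvious'' argument the authors had in mind: the paper omits the proof entirely, citing only the structural results of \cite{kripfganz1987piecewise, ovchinnikov2002max} for the representation, and your completion of the seminorm bound (the max-affine pair is one feasible point in the infimum, the subdifferential of a finite max of affine functions is the convex hull of the active gradients, and convexity of $\|\cdot\|_1$ gives $\|\bm v_i\|_1\le\max_k\|\cdot\|_1$) is exactly the intended routine verification. No gaps.
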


\subsection{Expressive Power of Piecewise linear DC functions}

We begin by arguing that PLDC functions can interpolate any finite data. The principle characterisation for DC functions is as follows:
\begin{proposition}\label{prop:dc_interpolate}
For any finite data $\{(\bm x_i, y_i)\}_{i \in [n]},$ there exists a DC function $h: \mathbb{R}^d\rightarrow \mathbb{R}$, that takes values $h(\bm x_i) = y_i$ if and only if there exist $\bm a_i, \bm b_i \in \mathbb{R}^d, z_i \in \mathbb{R}, i\in [n]$ such that:
\begin{align}\label{equ:lem1.2}
\begin{split}
     y_i - y_j + z_i  - z_j&\geq \langle \bm a_j, \bm x_i-\bm x_j\rangle, \quad i,j \in [n]\\
     z_i - z_j  &\geq \langle \bm b_j, \bm x_i-\bm x_j \rangle    , \quad i,j \in [n]
\end{split}
\end{align}

Further, if there exists a DC function that interpolates a given data, then there exists a PLDC function that also interpolates the data.

\begin{proof}
Assuming $h=\phi_1 - \phi_2$ for convex functions $\phi_1$ and $\phi_2$, take $\bm a_j$ and $\bm b_j$ to be sub-gradients of respectively $\phi_1$ and $\phi_2$ then (\ref{equ:lem1.2}) holds by convexity. Conversely, assuming (\ref{equ:lem1.2}) holds, define $h$  as
\begin{align*}\label{equ:max-affine max-affine}
    h(\bm x)&= \max_{i\in [n]} \langle \bm a_i,  \bm x-\bm x_i \rangle + y_i + z_i - \max_{i\in[n]} \langle \bm b_i, \bm x-\bm x_i \rangle + z_i
\end{align*}
$h \in \dc$ since it is expressed as the difference of two max-affine functions. Further, it holds that $h(\bm x_k) = y_k$ for any $k \in [n]$. Indeed, the first condition implies that for any $i \neq k,$ \[ \langle \bm a_i, {\bm x_k - \bm x_i}\rangle + y_i + z_i \le y_k + z_k,\] with equality when $i = k$. Thus, the first maximum simply takes the value $y_k + z_k$ at the input $\bm x_k$. Similarly, the second maximum takes the value $z_k$ at this input, and thus $h(\bm x_k) = (y_k + z_k) - z_k = y_k.$

Notice that the interpolating function given in the above is actually piecewise-linear. Thus, if a DC function fits the given data, then extracting the vectos $\bm a_i, \bm b_i$ and constants $z_i$ as in the first part of the proof, and constructing the interpolant in the second part yields a PLDC function that fits the data.
\end{proof}
\end{proposition}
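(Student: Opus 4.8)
The plan is to establish the claimed equivalence by proving the two implications separately, and then to read off the ``further'' statement as an immediate by-product of the construction used for the reverse direction.

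For the forward implication I would start from a DC representation $h = \phi_1 - \phi_2$ with $\phi_1,\phi_2$ finite and convex on $\mathbb{R}^d$. The key fact is that a finite convex function on $\mathbb{R}^d$ is subdifferentiable everywhere, so for each $j \in [n]$ I can choose $\bm a_j \in \partial_*\phi_1(\bm x_j)$ and $\bm b_j \in \partial_*\phi_2(\bm x_j)$, and set $z_i := \phi_2(\bm x_i)$, which forces $\phi_1(\bm x_i) = y_i + z_i$. Writing the subgradient inequalities $\phi_1(\bm x_i) \ge \phi_1(\bm x_j) + \langle \bm a_j, \bm x_i - \bm x_j\rangle$ and $\phi_2(\bm x_i) \ge \phi_2(\bm x_j) + \langle \bm b_j, \bm x_i - \bm x_j\rangle$ and substituting these values yields exactly the two families of inequalities in \eqref{equ:lem1.2}.

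For the reverse implication, given vectors $\bm a_i, \bm b_i$ and scalars $z_i$ satisfying \eqref{equ:lem1.2}, I would define
\[
h(\bm x) := \max_{i \in [n]}\bigl(\langle \bm a_i, \bm x - \bm x_i\rangle + y_i + z_i\bigr) - \max_{i \in [n]}\bigl(\langle \bm b_i, \bm x - \bm x_i\rangle + z_i\bigr),
\]
which lies in $\dc$ since it is a difference of two max-affine (hence continuous convex) functions. It then remains to check $h(\bm x_k) = y_k$ for every $k$. Evaluating the first maximum at $\bm x_k$, the first family of inequalities read with index $i$ playing the role of $j$ and $k$ playing the role of $i$ gives $\langle \bm a_i, \bm x_k - \bm x_i\rangle + y_i + z_i \le y_k + z_k$ for all $i$, with equality at $i = k$; hence that maximum equals $y_k + z_k$. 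The second family of inequalities gives, in the same way, that the second maximum equals $z_k$. Subtracting yields $h(\bm x_k) = y_k$.

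Finally, the ``further'' statement needs no additional argument: the interpolant $h$ produced in the reverse direction is manifestly piecewise linear (a maximum of finitely many affine functions minus another such maximum), so given any DC interpolant of the data one first extracts $\bm a_i, \bm b_i, z_i$ as in the forward direction and then feeds them into the reverse construction to obtain a PLDC interpolant. I do not expect a genuine obstacle in any of this; the only places that require care are the index-relabelling bookkeeping in the reverse direction, so that the equality case $i = k$ is correctly exhibited, and the observation that the auxiliary scalars $z_i = \phi_2(\bm x_i)$ are precisely what decouples the two convex parts and lets a single system of linear constraints capture DC-interpolability.
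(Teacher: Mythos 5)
Your proof is correct and follows essentially the same route as the paper's: subgradient inequalities with $z_i=\phi_2(\bm x_i)$ for the forward direction, the explicit difference-of-max-affine interpolant for the converse, and the observation that this interpolant is piecewise linear for the final claim. The only difference is that you make explicit the choice of $z_i$ and the subdifferentiability of finite convex functions on $\mathbb{R}^d$, details the paper leaves implicit.
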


The principal utility of the conditions stated above is that we can utilise these to enforce the shape restriction of getting a DC estimate in an efficient way when performing empirical risk minimisation. Indeed, suppose we wish to fit a DC function to some data $(\bm x_i, y_i)$. We may then introduce decision variables $\hat{y}_i, z_i, \bm a_i, \bm b_i,$ where the $\hat{y}_i$ represent the value of our fit at the various $\bm x_i,$ and then enforce the linear constraints of the above proposition (with $y_i$ replaced by $\hat{y}_i$) while minimising a loss of the form $\sum (y_i - \hat{y}_i)^2$. Since these constraints are linear, the resulting program is thus convex, and can be efficiently implemented. This observation forms the core of our algorithmic proposal in \S\ref{sec:algo}

The above characterisation relies on existence of vectors that may serve as subgradients for the two convex functions. This condition can be removed, as in the following.
\begin{proposition}\label{thm:interpolate_pldc} Given any finite data $\{(\bm x_i, y_i)\}_{i \in [n]},$ such that $y_i \neq y_j \implies \bm x_i \neq \bm x_j$, there exists a PLDC function which interpolates this data.
\end{proposition}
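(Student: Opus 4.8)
The plan is to reduce everything to Proposition~\ref{prop:dc_interpolate}. That result already shows that solvability of the linear system~(\ref{equ:lem1.2}) implies the existence of a PLDC interpolant --- indeed its proof writes one down explicitly as a difference of two max-affine functions built from any feasible $\bm a_i,\bm b_i,z_i$. So it suffices to exhibit vectors $\bm a_i,\bm b_i\in\mathbb{R}^d$ and scalars $z_i\in\mathbb{R}$ satisfying~(\ref{equ:lem1.2}), and the driving idea is to choose the two convex parts so that a sufficiently curved quadratic dominates the otherwise arbitrary response values.

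Concretely, I would first set aside the trivial configuration in which all $\bm x_i$ coincide: the hypothesis then forces all $y_i$ to coincide as well, and $f\equiv y_1$ interpolates. Otherwise, let $\delta\triangleq\min\{\|\bm x_i-\bm x_j\|_2:\bm x_i\neq\bm x_j\}>0$ and $\Delta\triangleq\max_{i,j}|y_i-y_j|$, fix $\gamma\ge\Delta/\delta^2$, and take $z_i\triangleq\gamma\|\bm x_i\|_2^2$ together with $\bm a_j\triangleq\bm b_j\triangleq 2\gamma\bm x_j$. Intuitively the $z_i$ sample the strongly convex function $\gamma\|\cdot\|_2^2$ with gradient $2\gamma\bm x_j$ at $\bm x_j$, and the $y_i+z_i$ sample a bounded perturbation of the same function.

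The verification then rests on the single identity $\|\bm x_i\|_2^2-\|\bm x_j\|_2^2-2\langle\bm x_j,\bm x_i-\bm x_j\rangle=\|\bm x_i-\bm x_j\|_2^2$. For the second family of constraints this gives $z_i-z_j-\langle\bm b_j,\bm x_i-\bm x_j\rangle=\gamma\|\bm x_i-\bm x_j\|_2^2\ge0$. For the first family it gives $y_i-y_j+z_i-z_j-\langle\bm a_j,\bm x_i-\bm x_j\rangle=(y_i-y_j)+\gamma\|\bm x_i-\bm x_j\|_2^2$, which equals $0$ when $\bm x_i=\bm x_j$ (as then $y_i=y_j$) and is at least $-\Delta+\gamma\delta^2\ge0$ otherwise. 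Hence~(\ref{equ:lem1.2}) is feasible, and Proposition~\ref{prop:dc_interpolate} finishes the argument.

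I do not expect a serious obstacle here: the problem is really just a feasibility question, and the only genuine insight is that inflating the curvature parameter $\gamma$ lets the quadratic swallow the arbitrary offsets $y_i$. The one place that needs care is the degenerate geometry --- repeated predictors $\bm x_i=\bm x_j$, where the assumption $y_i\neq y_j\implies\bm x_i\neq\bm x_j$ is precisely what keeps the first family of constraints consistent, and the all-points-equal case, which must be handled separately so that $\delta$ is well defined and positive.
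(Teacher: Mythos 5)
Your proof is correct and takes essentially the same approach as the paper: both arguments hinge on adding and subtracting a quadratic whose curvature dominates the response gaps, and your verification of the feasibility system~(\ref{equ:lem1.2}) with $z_i=\gamma\|\bm x_i\|_2^2$, $\bm a_j=\bm b_j=2\gamma\bm x_j$ is the same algebra the paper uses to check that its two max-affine functions attain their maxima at $i=j$. The only differences are packaging --- you route the conclusion through Proposition~\ref{prop:dc_interpolate} instead of writing the interpolant down directly --- and a welcome extra care with the degenerate configurations (all predictors coinciding, and repeated predictors, where the paper's constant $C=\max_{i,j}|y_i-y_j|/\|\bm x_i-\bm x_j\|_2^2$ is not literally well defined).
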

\begin{proof}
The interpolating function is constructed by adding and subtracting a quadratic function to the data. Let \[C \triangleq \max_{i,j} \frac{|y_i-y_j|}{\|\bm x_i-\bm x_j\|_2^2}.\] Then the piecewise linear function
\begin{align}\label{equ:max-affine max-affine}
    h(\bm x)&\triangleq  \max_{i \in [n]} \langle C \bm x_i ,\bm x -\bm x_i  \rangle +\frac{1}{2}C \|\bm x_i\|^2 + \frac{1}{2}y_i \nonumber \\
    &- \max_{i \in [n]} \langle C \bm x_i ,\bm x  -\bm x_i \rangle +\frac{1}{2}C\|\bm x_i\|^2 - \frac{1}{2}y_i
\end{align} satisfies the requirements. Indeed, the function is DC, since it the difference of two max-affine funcitons. The argument proceeds similarly to the previous case - at any $\bm x_j,$ we have:
\begin{align*}
 &\max_{i \in [n]}\langle C \bm x_i ,\bm x_j -\bm x_i  \rangle +\frac{1}{2}C \|\bm x_i\|^2 + \frac{1}{2}y_i \\
 &= \max_{i \in [n]}   \frac{1}{2}C\|\bm x_j\|^2 -\frac{C}{2} \| \bm x_i - \bm x_j \|^2+ \frac{1}{2}y_i \\
 &\leq \max_{i \in [n]} \frac{1}{2}C\|\bm x_j\|^2 - \frac{|y_i- y_j|}{2} + \frac{1}{2}y_i  \\
 &\leq \frac{1}{2}C\|\bm x_j\|^2 + \frac{1}{2}y_j.
\end{align*}
 However the upper bound is reached by choosing $i=j$ in the max operator. Similarly the value of the second convex function at $\bm x_j$ is equal to $\frac{1}{2}C\|\bm x_j\|^2  -\frac{1}{2}y_j$ which together results in $h(\bm x_j) = y_j$. Note that $h(\bm x)$ has the $\ell_\infty$-Lipschitz constant $2 C\max_{i,j} \|\bm x_i -\bm x_j\|_1,$ and $\|h \| \le 2C \max_i \|\bm x_i\|_1.$ % and for $\|x \| \le R,$ $|h(\bm x)| \le 6CR^2 + \max_i |y_i|$  .
\end{proof}

Next, in order to contextualise the expressiveness of DC functions, we argue that the popular parametric class of ReLU neural networks can be represented by PLDC functions, and vice versa. This is also argued in \cite{cui2018composite} for a $2$ layer network.
\begin{proposition}\label{prop:relu_subset_pldc}
A fully connected neural network $f$, with ReLU activations, and $D$ layers with weight matrices $\bm W^1, \dots, \bm W^D$, i.e,
\begin{align*}
    f(\bm x) &=  \sum_j w^{D+1}_j a^D_j \\
    a^{l+1}_i &= \max (\sum_{j} w^{l+1}_{i,j}  a^{l}_j, 0), \quad D > l\geq 1\\
    a^1_i &= \max (\sum_{j}w^1_{i,j} x_j, 0),
\end{align*}
is a PLDC function with the DC seminorm bounded as
\begin{align*}
    \|f\| \leq |\bm w^{D+1}|^T \Big( \prod_{l=1}^D |\bm W^l |\Big) \vec{\bm 1},
\end{align*}
\end{proposition}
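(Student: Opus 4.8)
The plan is to build, by induction on the depth $l$, an explicit difference-of-convex representation of each hidden unit, together with a bound on the combined $\ell_1$-subgradient budget of its two convex parts; taking a signed linear combination of these at the output layer then both exhibits $f$ as a max-affine minus a max-affine (so $f\in\pldc$) and yields the stated bound on $\|f\|$. Concretely, I would show that every activation can be written $a^l_i=p^l_i-q^l_i$ with $p^l_i,q^l_i$ convex, piecewise linear, and nonnegative, and
\[
\sup_{\bm x}\Big(\sup_{\bm v\in\partial p^l_i(\bm x)}\|\bm v\|_1+\sup_{\bm w\in\partial q^l_i(\bm x)}\|\bm w\|_1\Big)\le\Big(\textstyle\prod_{l'=1}^{l}|\bm W^{l'}|\,\vec{\bm 1}\Big)_i .
\]
Given this, regrouping $f=\sum_i w^{D+1}_i(p^D_i-q^D_i)$ by the sign of $w^{D+1}_i$ writes $f=\Phi_1-\Phi_2$ with $\Phi_1,\Phi_2$ convex and combined budget at most $\sum_i|w^{D+1}_i|\big(\prod_l|\bm W^l|\vec{\bm 1}\big)_i=|\bm w^{D+1}|^{T}\big(\prod_l|\bm W^l|\big)\vec{\bm 1}$, which is exactly the claim.

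The base case $l=1$ is immediate: $a^1_i(\bm x)=\max(\langle\bm w^1_i,\bm x\rangle,0)$ is convex, so take $p^1_i=a^1_i$, $q^1_i=0$; all its subgradients have $\ell_1$-norm at most $\|\bm w^1_i\|_1=(|\bm W^1|\vec{\bm 1})_i$, matching the induction started from the vector $\vec{\bm 1}$ of (unit) DC-seminorm budgets of the coordinate maps $\bm x\mapsto x_j$. For the inductive step, first handle the affine part: writing $s^l_i=\sum_j w^l_{ij}(p^{l-1}_j-q^{l-1}_j)$ and gathering the positive- and negative-weight terms separately gives $s^l_i=\tilde p_i-\tilde q_i$ with $\tilde p_i,\tilde q_i$ nonnegative convex, and by absolute homogeneity and the triangle inequality for the seminorm their combined budget is at most $\sum_j|w^l_{ij}|$ times the inductive budget of unit $j$, hence at most $(\prod_{l'\le l}|\bm W^{l'}|\vec{\bm 1})_i$. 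Then the ReLU part: $a^l_i=\max(s^l_i,0)=\max(\tilde p_i,\tilde q_i)-\tilde q_i$ is already a difference of convex functions, and one must produce a representation of it whose combined budget does not exceed that of $(\tilde p_i,\tilde q_i)$. The naive choice $p^l_i=\max(\tilde p_i,\tilde q_i)$, $q^l_i=\tilde q_i$ gives combined budget at most $\max(\tilde P_i,\tilde Q_i)+\tilde Q_i$, where $\tilde P_i,\tilde Q_i$ are the individual budgets of $\tilde p_i,\tilde q_i$; this is acceptable when $\tilde P_i\ge\tilde Q_i$ but can overshoot when $\tilde Q_i>\tilde P_i$, and in that case one rebalances by subtracting a suitable common affine function from both convex parts (legitimate, as it preserves convexity), using the nonnegativity $\tilde p_i,\tilde q_i\ge0$ to keep the total in check.

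I expect this ReLU step to be the only real difficulty. The obstruction is that $g\mapsto\max(g,0)$ is not non-expansive for the DC seminorm --- for instance $\|1-|x|\|=1$ whereas $\|\max(1-|x|,0)\|=2$ --- so there is no off-the-shelf inequality to invoke, and the budget can only be kept under control by using the special structure generated by the network (nonnegativity of activations, and the fact that each $\tilde q_i$ is assembled as a nonnegative combination of lower-layer convex parts). The remaining ingredients --- the base case, the affine regrouping, and the output aggregation --- are routine, relying only on absolute homogeneity and the triangle inequality for the seminorm and on the elementary bound $\sup\|\partial\max(\phi_1,\phi_2)\|_1\le\max(\sup\|\partial\phi_1\|_1,\sup\|\partial\phi_2\|_1)$ for convex $\phi_1,\phi_2$.
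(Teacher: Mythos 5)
Your overall architecture --- an induction over layers carrying an explicit DC representation of each hidden unit, followed by a signed regrouping at the output --- is the same as the paper's. The genuine gap is exactly where you locate the difficulty: the ReLU step. The lemma you need there, namely that $\max(\tilde p-\tilde q,0)$ admits a DC representation whose \emph{combined} budget $\sup\|\partial\phi_1\|_1+\sup\|\partial\phi_2\|_1$ does not exceed that of $(\tilde p,\tilde q)$, is false, and the proposed repair (subtracting a common affine function, invoking nonnegativity and homogeneity) cannot save it. Take $\tilde p(\bm x)=\max(x_1,0)$ and $\tilde q(\bm x)=10|x_2|$: both are convex, nonnegative, positively homogeneous, and arise as your positive/negative regrouping from first-layer units $\max(x_1,0),\max(x_2,0),\max(-x_2,0)$ with second-layer weights $(1,-10,-10)$; their combined budget is $1+10=11$. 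But $g=\max(\tilde p-\tilde q,0)=\max(x_1-10|x_2|,0)$, restricted to the line $x_1=1$, is the tent $t\mapsto\max(1-10|t|,0)$, whose one-dimensional DC seminorm is $20$ (its derivative has upward jumps totalling $+20$ and a downward jump of $-20$, each of which must be absorbed by one of the two convex parts). Since the directional derivative of any convex part along $\bm e_2$ is dominated by the $\ell_1$ norm of its subgradients, \emph{every} DC representation of $g$ has combined budget at least $20>11$. So the sum of the two budgets genuinely increases through a ReLU, no rebalancing exists, and your inductive step does not close.

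The fix is to change the inductive invariant from the sum to the maximum, which is what the paper does. Keep each unit as a difference of two max-affine functions and track $\max_k\left(\|\bm\alpha_{i,k}\|_1,\|\bm\beta_{i,k}\|_1\right)$, the largest $\ell_1$ norm of any individual linear piece of either part (equivalently, the larger of the two Lipschitz constants). Your affine regrouping bounds this quantity by $\sum_j|w_{ij}|$ times its lower-layer value, exactly as in your sum-based step; and, crucially, the ReLU preserves it \emph{exactly} via the identity $\max(\max(a,0)-\max(b,0),0)=\max(a,b,0)-\max(b,0)$, because the new first part's linear pieces are just the union of the old pieces (together with $0$) and the second part is unchanged. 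In the example above this invariant gives the value $21$ per convex part, safely above the true cost $20$ at which your sum-based bookkeeping of $11$ has already been violated. Your base case, affine step, and output aggregation are fine and carry over once the invariant is switched.
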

where $|\cdot|$ is the entry-wise absolute value. The above is proved in Appx.~\ref{appx:relu_pldc} via an induction over the layers using the relations
\begin{align*}
    &\max(\max(a,0)\!-\!\max(b,0),0) = \max(a,b,0)\! -\! \max(b,0)\\
    &\max(a,b) + \max(c,d) = \max (a+b,a+d, b+c, b+d).
\end{align*}
\begin{proposition}\label{prop:pldc_subset_relu}
Every PLDC function with $K$ hyper-planes can be represented by a ReLU net with $2\lceil \log_2 K \rceil$ layers and maximum width of $8K$.
\end{proposition}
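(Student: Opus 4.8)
The plan is to give an explicit construction of the network. By Proposition~\ref{prop:pldc-norm-bound} we may write $f = g_1 - g_2$ with $g_i = \max_{k\in[K]}\ell^{(i)}_k$ max-affine, each having exactly $K$ affine pieces (pad with repeated pieces if one has fewer). I would augment the input with a constant coordinate so that layer~$1$ can realize arbitrary affine functions of $\bm x$; no biases will then be needed in deeper layers. The construction computes each $g_i$ by a balanced binary tree of pairwise maxima of depth $\lceil\log_2 K\rceil$, using the identities $\max(a,b) = \mathrm{ReLU}(a-b)+b$ and $\mathrm{ReLU}(\mathrm{ReLU}(z)) = \mathrm{ReLU}(z)$, spending exactly two network layers per tree level.

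Concretely, the invariant I would maintain is: after $2t$ layers, for each surviving candidate $w$ (a value equal to $\max$ over some block of the $\ell^{(i)}_k$; there are at most $2\lceil K/2^t\rceil$ of them) the current layer holds the pair of units $\mathrm{ReLU}(w)$ and $\mathrm{ReLU}(-w)$, so that $w = \mathrm{ReLU}(w) - \mathrm{ReLU}(-w)$ is a bias-free linear functional of that layer. To merge two candidates $w,w'$ into $m=\max(w,w')$: the next (odd) layer computes $\mathrm{ReLU}(w-w')$, $\mathrm{ReLU}(w')$, $\mathrm{ReLU}(-w')$ — legitimate since $w-w'$ and $\pm w'$ are linear in the previous layer, and the last two merely carry $\mathrm{ReLU}(\pm w')$ forward by idempotence; the following (even) layer then computes $\mathrm{ReLU}(m)$ and $\mathrm{ReLU}(-m)$, legitimate since $m = \mathrm{ReLU}(w-w') + \mathrm{ReLU}(w') - \mathrm{ReLU}(-w')$ is linear in the odd layer. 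The first tree level is the only exception: its ``previous layer'' is the augmented input, so it is simply layer~$1$, computing $\mathrm{ReLU}(\ell^{(i)}_k-\ell^{(i)}_{k'})$, $\mathrm{ReLU}(\ell^{(i)}_{k'})$, $\mathrm{ReLU}(-\ell^{(i)}_{k'})$ directly from $\bm x$ (this is where biases enter). After $\lceil\log_2 K\rceil$ levels, i.e.\ layers $1,\dots,2\lceil\log_2 K\rceil$, each $g_i$ has collapsed to one candidate, namely $g_i$ itself, so the last hidden layer holds $\mathrm{ReLU}(\pm g_1)$ and $\mathrm{ReLU}(\pm g_2)$; the linear read-out $\bm w^{D+1}$ of the architecture in Proposition~\ref{prop:relu_subset_pldc} then returns $g_1 - g_2 = \mathrm{ReLU}(g_1)-\mathrm{ReLU}(-g_1)-\mathrm{ReLU}(g_2)+\mathrm{ReLU}(-g_2) = f$, giving $D = 2\lceil\log_2 K\rceil$ as required. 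For the width: at depth~$0$ there are $2K$ candidates, hence at most $K+1$ merge-pairs, and each odd layer uses at most $4$ units per pair and each even layer at most $2$ per candidate, so no hidden layer exceeds $8K$ units (deeper layers are strictly smaller). Correctness is a one-line induction from the two displayed identities, with immediate base case.

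The step I expect to require the most care is the bookkeeping that pins the depth at exactly $2\lceil\log_2 K\rceil$ rather than one more: one must fold the computation of the affine pieces into the first tree level — so that layer~$1$ already performs a max-reduction — and observe that holding $\mathrm{ReLU}(\pm w')$ across the odd layer is free because $\mathrm{ReLU}$ is idempotent on nonnegative inputs, so no ``pure identity'' layers are ever inserted. The remaining ingredients — the max-as-ReLU identity, the binary-tree depth, the $\pm$-pair representation, the absence of biases below layer~$1$ via input augmentation, and the padding of $g_1,g_2$ to $K$ pieces each — are all routine.
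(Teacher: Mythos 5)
Your construction is correct and follows essentially the same route as the paper's: represent each candidate value $w$ by the pair $\mathrm{ReLU}(\pm w)$, reduce the $K$ affine pieces of each max-affine part by a binary tree of pairwise maxima using $\max(a,b)=\max(a-b,0)+\max(b,0)-\max(-b,0)$, spend two layers per tree level, and run the two trees for $g_1$ and $g_2$ in parallel. If anything, your bookkeeping is slightly tighter than the paper's, since folding the computation of the affine pieces into the first merge level genuinely pins the depth at $2\lceil\log_2 K\rceil$, whereas the paper's version spends a separate initial layer on the $\pm$ pairs.
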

The proof is constructed in Appx.~\ref{appx:relu_pldc} using the relations
\begin{align*}
    &\max(a,b,c,d) = \max \left(\max (a,b), \max(c,d)\right)  \\
    &\max(a,b)  = \max(a\!-\!b,0) + \max(b,0) -\max(-b,0) .
\end{align*}

\section{Algorithms}\label{sec:algo}

We begin by motivating the algorithmic approach we take. This is followed by separate section developing key portions of the algorithm.  

Suppose we observe a data-set $S_n = \{ (\bm x_i, y_i)\}_{i\in[n]}$ generated iid from some distribution $\mathcal{X}\times\mathcal{Y}$ and a convex loss function $\ell:\mathbb{R}\times\mathbb{R}\rightarrow \mathbb{R}_+$ bounded by $c$. The goal is to minimize the expected risk
\begin{align}\label{erm:dc2}
     &\min_{f\in\mathcal{DC}} \mathbb{E} [(f(\bm x) - y)^2]
\end{align}
by choosing an appropriate function $f$ from the class of $\mathcal{DC}$ functions. Note instead of the squared error, the above could be generalised to any bounded, Lipschitz losses $\ell(f(\bm x), y)$. Note also that the squared loss is bounded in our setting because of our assumption that both the ground truth and noise are bounded.

There are two basic problems with the above - the distribution is unknown, so the objective above cannot be evaluated, and that the class of DC functions is far too rich, and so the problem is strongly underdetermined. In addition, directly optimising over all DC functions is an intractable problem.

To begin with, we reduce the problem by instead finding the values that a best fit DC function must take at the datapoint $\bm x_i$, and then fitting a PLDC functions with convex parts that are max-affine over precisely $n$ linear functionals on this. This has the significant advantage of reducing the optimisation problem to a set of convex constraints. Quantitatively, this step is justified in \S\ref{sec:analysis}, which argues that the error induced by this approximation via PLDC functions is dominated by the intrinsic risk of the problem.

To handle the lack of knowledge of the distribution, we resort to uniform generalisations bounds in the literature. Our approach to relies on the following result, which mildly generalises the bounds of \citet{bartlett2002rademacher}, and uniformly controls the generalisation error of an empirical estimate of the expectation (specialised to our context):

\begin{theorem}\label{thm:srm_bound} Let $\{(\bm x_i, y_i)\}_{i \in [n]},$ be i.i.d.~data, with $n$ assumed to be even. Let the \emph{empirical maximum discrepancy} of the class $\dc_L$, be defined as, \[ \hat{D}_n( \mathcal{\dc}_{L}) \triangleq \sup_{f \in \dc_{L}} \frac{2}{n}\left( \sum_{i = 1}^{n/2} f(\bm x_i) - \sum_{i = n/2 + 1}^n f(\bm x_i)\right).\] With probability $\ge 1-\delta$ over the data, it holds holds uniformly over \emph{all $f \in \dc \cap \{|f| \le M\}$} that 
\begin{align}\label{gen_bound_BM}
&\left| \mathbb{E}[(f(\bm x)- y)^2] - \frac{1}{n}\sum_{i=1}^n (f(\bm x_i) - y_i)^2 \right|  \notag \\\le&\,\,  12M\hat D_n (\dc_{2\|f\| + 2}) \notag \\ &\,\,\, + 45M^2\sqrt{\frac{C \max(2, \log_2 \| f\|) +\ln(1/\delta)\big)}{n}},
\end{align} where $C$ is a constant independent of $f, M ,R$.
\end{theorem}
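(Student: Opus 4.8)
\emph{Proof plan.} The bound is vacuous when $\|f\|=\infty$ (its right-hand side then involves $\sqrt{\log_2\|f\|}=+\infty$), so assume $\|f\|<\infty$. I would run a structural-risk ``peeling'' argument over the dyadic shells $\mathcal G_k\triangleq \dc_{2^k}\cap\{|g|\le M\}$, $k\ge 1$. Every $f\in\dc\cap\{|f|\le M\}$ with $\|f\|<\infty$ lies in $\mathcal G_{k(f)}$ for $k(f)\triangleq\max\{1,\lceil\log_2\|f\|\rceil\}$, and a one-line check gives $2^{k(f)}\le 2\|f\|+2$ together with $k(f)\le 2\max\{2,\log_2\|f\|\}$. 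Since $\dc_L$, hence $\hat D_n(\dc_L)$, is non-decreasing in $L$, it suffices to prove, for each fixed $k$, a uniform deviation bound over $\mathcal G_k$ phrased in terms of $\hat D_n(\dc_{2^k})$ and holding with probability $\ge 1-\delta_k$, $\delta_k\triangleq\delta\,2^{-k}$; a union bound over $k\ge1$ (total failure probability $\sum_k\delta_k=\delta$) plus the substitution $k=k(f)$ then give the theorem, with $2^{k(f)}\le 2\|f\|+2$ turning $\hat D_n(\dc_{2^{k(f)}})$ into $\hat D_n(\dc_{2\|f\|+2})$ and $\ln(1/\delta_{k(f)})=\ln(1/\delta)+k(f)\ln 2\le \ln(1/\delta)+C\max\{2,\log_2\|f\|\}$.

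For the per-shell bound, write $\ell_f(\bm x,y)=(f(\bm x)-y)^2$; as $|f|\le M$ and $|y|\le 2M$, the loss class $\mathcal L_k\triangleq\{\ell_f:f\in\mathcal G_k\}$ is bounded in $[0,9M^2]$. Four moves. \textbf{(a) Symmetrization and concentration.} Standard symmetrization of $\Phi(S)\triangleq\sup_{f\in\mathcal G_k}\big|\mathbb E\ell_f-\tfrac1n\sum_i\ell_f(\bm x_i,y_i)\big|$ (split into the two one-sided suprema, each with bounded differences $9M^2/n$) gives, with probability $\ge 1-\delta_k/2$, $\Phi(S)\le 2\,\mathcal R_n(\mathcal L_k)+O\!\big(M^2\sqrt{\ln(1/\delta_k)/n}\big)$, where $\mathcal R_n$ is the Rademacher average. \textbf{(b) Contraction.} Since $t\mapsto(t-y)^2$ is $6M$-Lipschitz on $[-M,M]$, the Ledoux--Talagrand contraction principle — after discarding the $f$-independent term $\tfrac1n\sum_i\sigma_i y_i^2$ — yields $\mathcal R_n(\mathcal L_k)\le 6M\,\mathcal R_n(\mathcal G_k)$. \textbf{(c) Rademacher vs.\ discrepancy.} For the \emph{bounded} class $\mathcal G_k$, conditioning on the signs shows $\mathbb E_S\big[\sup_{g\in\mathcal G_k}\tfrac1n\sum_i\sigma_i g(\bm x_i)\big]$ depends only on the number $p$ of positive signs (exchangeability of the i.i.d.\ sample), equals $\tfrac12\mathbb E[\hat D_n(\mathcal G_k)]$ at $p=n/2$, and moves by at most $2M/n$ when $p$ changes by one; with $\mathbb E\big|\mathrm{Bin}(n,\tfrac12)-\tfrac n2\big|\le\tfrac12\sqrt n$ and $\hat D_n(\mathcal G_k)\ge0$ this gives $\mathcal R_n(\mathcal G_k)\le\mathbb E[\hat D_n(\mathcal G_k)]+M/\sqrt n$. \textbf{(d) Discrepancy concentration and monotonicity.} A second McDiarmid bound for $\hat D_n(\mathcal G_k)$ (bounded differences $4M/n$, as $|g|\le M$) replaces $\mathbb E[\hat D_n(\mathcal G_k)]$ by $\hat D_n(\mathcal G_k)+O\!\big(M\sqrt{\ln(1/\delta_k)/n}\big)$ with probability $\ge 1-\delta_k/2$, and $\hat D_n(\mathcal G_k)\le\hat D_n(\dc_{2^k})$ by monotonicity. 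Chaining (a)--(d): with probability $\ge 1-\delta_k$, $\Phi(S)\le 12M\,\hat D_n(\dc_{2^k})+O\!\big(M^2\sqrt{\ln(1/\delta_k)/n}\big)$, the leading factor $12M$ being the product of the symmetrization constant $2$ and the $6M$ Lipschitz constant of the squared loss; together with $\ln(1/\delta_k)\le\ln(1/\delta)+C\max\{2,\log_2\|f\|\}$ the remainder collapses into $45M^2\sqrt{(C\max\{2,\log_2\|f\|\}+\ln(1/\delta))/n}$ after an honest count of the absolute constants.

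The one genuinely delicate step is (c), and it is the reason the theorem is stated through the empirical maximum discrepancy rather than a Rademacher complexity of $\dc_L$ itself: because $\dc_L$ contains every constant function, its Rademacher average is $+\infty$, so the contraction must act on the bounded loss class and the base complexity must be measured on the truncation $\mathcal G_k$; the maximum discrepancy is exactly the quantity that stays finite and well behaved, since the \emph{balanced} $\pm$ split cancels the constant — and more generally the slowly varying — directions, which is also why one has only the inequality $\hat D_n(\mathcal G_k)\le\hat D_n(\dc_{2^k})$ at the final step and why that inequality is nonetheless enough. The rest is routine bookkeeping: the geometric weights $\delta_k=\delta2^{-k}$ produce the $\log_2\|f\|$ term, and the dyadic rounding $2^{k(f)}\le2\|f\|+2$ produces the argument $\dc_{2\|f\|+2}$.
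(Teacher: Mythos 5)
Your proposal is correct and follows essentially the same route as the paper: a per-shell uniform bound (symmetrization, $6M$-Lipschitz contraction of the squared loss, comparison of the Rademacher average to the empirical maximum discrepancy, and concentration of $\hat{D}_n$), followed by dyadic peeling with weights $\delta_k = \delta 2^{-k}$ and the rounding $2^{k(f)} \le 2\|f\|+2$. The only difference is presentational: the paper cites Theorem~8, Theorem~12(4), and Lemma~3 of \citet{bartlett2002rademacher} for your steps (a)--(d), whereas you re-derive those ingredients directly.
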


The above statement essentially arises from a doubling trick over a Rademacher complexity based bound for a fixed $\|f\|$. The broad idea is that since $\dc_L \subset \dc_{L'}$ for $L \le L',$ we can separately develop Rademacher complexity based bounds over $L$ of the form $2^j$, each having the more stringent high-probability requirement $\delta_j = \delta 2^{-j}$. A union bound over these then extends these bounds to all of $\dc = \bigcup_{j \ge 1} \dc_{2^j},$ and for a particular $f$, the bound for $j = \lceil \log_2 \|f\| \rceil$ can be used. See \S\ref{appx:pf_of_srm_bound} for details.

Optimising the empirical upper bound on $\mathbb{E}[(f(X) - Y)^2]$ implied by the above directly leads to a structural risk minimization over the choice of $L$. The crucial ingredient in the practicality of this is that for DC functions, $\hat{D}_n(\dc_L) = L \hat{D}_n(\dc_1)$, and further, $\hat{D}_n(\dc_1)$ can be computed via linear programming. Thus, the term $\hat{D}_n$ above serves as a natural, efficiently computable penalty function, and acts exactly as a regularisation on the DC seminorm.

\subsection{Computing empirical maximum discrepancy.}

Throughout this and the following sections, we use $\hat{y}_i$ to denote $\hat{f}(\bm x_i)$, where $\hat{f}$ is the estimated function.

The principle construction relies on the characterisation of Proposition \ref{prop:dc_interpolate}. 

\begin{theorem}
    Given data $\{(\bm x_i, y_i)\},$ the following convex program computes $\hat{D}_n(\dc_L)$ 
    \begin{align}\label{program:d_hat_dc}
        &\qquad \max_{\hat{y}_i, z_i, \bm a_i, \bm b_i} \frac{2}{n} \left( \sum_{i = 1}^{n/2} \hat{y}_i - \sum_{i = n/2+1}^n \hat{y}_i \right) \\
        &\textrm{s.t.} \begin{cases}
     \hat y_i - \hat y_j + z_i  - z_j\geq \langle \bm a_j, \bm x_i-\bm x_j\rangle & i,j \in [n]  \quad \mathrm{(i)}\\
    z_i - z_j  \geq \langle \bm b_j, \bm x_i-\bm x_j \rangle    &  i,j \in [n] \quad \mathrm{(ii)}\\
    \|\bm a_i\|_1 + \|\bm b_i\|_1 \leq L & \quad i\in[n] \quad \mathrm{(iii)}
    \end{cases}\notag
    \end{align}
    Further, $\hat{D}_n(\dc_L) = L \hat{D}_n(\dc_1)$.
    \begin{proof}
        By Proposition \ref{prop:dc_interpolate}, conditions $(\mathrm{i})$ and $(\mathrm{ii})$ are necessary and sufficient for the existence of a DC function that takes the values $\hat{y}_i$ at $\bm x_i$. Thus, these first two conditions allow exploration over all values a DC function can take. Further, by the second part of Proposition \ref{prop:dc_interpolate} if a DC function interpolates this data, then so does a PLDC function, with $\bm a_i$ and $\bm b_i$ serving as the gradients of the max-affine parts of the function. Thus, by Proposition \ref{prop:pldc-norm-bound}, the condition $(\mathrm{iii})$ is necessary and sufficient for the DC function implied by the first to conditions to have seminorm bounded by $L$. It follows that the conditions allow exploration of all values a $\dc_L$ function may take at the given $\{\bm x_i\}$, at which point the claim follows.
        
        Now, notice that if we multiply each of the decision variables in the above program by $L$, the value of the program is multiplied by a factor of $L$, while the constraints $\mathrm{(i), (ii)}$ remain unchanged. On the other hand, the constraint $\mathrm{(iii)}$ is modified to $\|\bm a_i \| + \|\bm b_i \| \le 1$. Thus, the resulting program is $L$ times the program computing $\hat{D}_n(\dc_1)$, ergo $\hat{D}_n(\dc_L) = L\hat{D}_n(\dc_1)$.
    \end{proof}
\end{theorem}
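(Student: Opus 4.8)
The plan is to match the optimal value of program~\eqref{program:d_hat_dc} against $\hat D_n(\dc_L)$ by comparing feasible points with $\dc_L$ functions \emph{evaluated at the $\bm x_i$}. Note the program's objective is the linear functional $\frac{2}{n}\big(\sum_{i\le n/2}\hat y_i - \sum_{i>n/2}\hat y_i\big)$ of the candidate values $\hat y_i$, while $\hat D_n(\dc_L)$ is the supremum of exactly this functional over the values $f(\bm x_i)$, $f\in\dc_L$. So it suffices to show (a) every feasible tuple $(\hat y_i,z_i,\bm a_i,\bm b_i)$ arises from some $f\in\dc_L$ with $f(\bm x_i)=\hat y_i$, which gives program value $\le \hat D_n(\dc_L)$, and (b) every $f\in\dc_L$ is approached by feasible tuples with $\hat y_i=f(\bm x_i)$, which gives program value $\ge \hat D_n(\dc_L)$.

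For (b), fix $f=\phi_1-\phi_2\in\dc_L$ and, given $\epsilon>0$, choose the decomposition with $\sup_{\bm x}\sup_{\bm v_i\in\partial_*\phi_i(\bm x)}(\|\bm v_1\|_1+\|\bm v_2\|_1)\le L+\epsilon$; rescaling both parts by $L/(L+\epsilon)$ we may assume this quantity is $\le L$, at the price of replacing $f$ by $\tfrac{L}{L+\epsilon}f$. Set $\hat y_i=\tfrac{L}{L+\epsilon}f(\bm x_i)$, $z_i=\tfrac{L}{L+\epsilon}\phi_2(\bm x_i)$, and pick $\bm a_i\in\partial_*(\tfrac{L}{L+\epsilon}\phi_1)(\bm x_i)$, $\bm b_i\in\partial_*(\tfrac{L}{L+\epsilon}\phi_2)(\bm x_i)$. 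Then constraint~(i) is the subgradient inequality for the rescaled $\phi_1$ rewritten via $\hat y_i+z_i=\tfrac{L}{L+\epsilon}\phi_1(\bm x_i)$, constraint~(ii) is the subgradient inequality for the rescaled $\phi_2$, and constraint~(iii) holds by the choice of decomposition; this tuple is feasible and its objective value equals the discrepancy of $\tfrac{L}{L+\epsilon}f$, which tends to that of $f$ as $\epsilon\to 0$.

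For (a), given a feasible tuple, Proposition~\ref{prop:dc_interpolate} already produces the piecewise-linear interpolant $h(\bm x)=\max_i(\langle\bm a_i,\bm x-\bm x_i\rangle+\hat y_i+z_i)-\max_i(\langle\bm b_i,\bm x-\bm x_i\rangle+z_i)$ with $h(\bm x_i)=\hat y_i$; the remaining task, and the main obstacle, is to certify $\|h\|\le L$. Applying Proposition~\ref{prop:pldc-norm-bound} to this particular representation only yields $\|h\|\le\max_i\|\bm a_i\|_1+\max_j\|\bm b_j\|_1$, which constraint~(iii) bounds merely by $2L$, since the two maxima may be attained at different data indices. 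So I expect the real work to be exhibiting a more economical difference-of-max-affine representation of the \emph{same} function $h$, arranged so that the per-point $\ell_1$ budget $\|\bm a_i\|_1+\|\bm b_i\|_1\le L$ is charged jointly to the two convex parts rather than separately to each; a guiding fact is that wherever $h$ is differentiable near $\bm x_i$ its gradient equals $\bm a_i-\bm b_i$, which already has $\ell_1$-norm at most $L$. This is precisely where the per-datapoint form of constraint~(iii) — rather than a coarser uniform bound — is essential.

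Finally, the identity $\hat D_n(\dc_L)=L\,\hat D_n(\dc_1)$ needs no separate argument: replacing each decision variable $(\hat y_i,z_i,\bm a_i,\bm b_i)$ by $(L\hat y_i,Lz_i,L\bm a_i,L\bm b_i)$ is a bijection between the feasible sets of the $\dc_1$- and $\dc_L$-programs which leaves constraints~(i)--(ii) invariant, turns~(iii) into its unit-radius version, and multiplies the objective by $L$. Equivalently, it is immediate from positive homogeneity of the DC seminorm ($f\in\dc_L\iff f/L\in\dc_1$) together with the fact that the empirical maximum discrepancy is itself positively homogeneous in $f$.
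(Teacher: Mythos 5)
Your direction (b) (every $f \in \dc_L$ yields a feasible tuple with the same objective value, via subgradients of a near-optimal decomposition and an $\epsilon$-rescaling) and your homogeneity argument for $\hat D_n(\dc_L) = L\,\hat D_n(\dc_1)$ are correct and essentially coincide with the paper's argument. The substantive issue is your direction (a), and here your diagnosis is in fact sharper than the paper's own proof: the paper asserts that constraint (iii) is ``necessary and sufficient'' for the interpolant to have seminorm at most $L$, but Proposition \ref{prop:pldc-norm-bound} applied to the interpolant of Proposition \ref{prop:dc_interpolate} only gives $\|h\| \le \max_i \|\bm a_i\|_1 + \max_j\|\bm b_j\|_1 \le 2L$, exactly as you observe. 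You leave the repair as ``expected work,'' so your proof is incomplete; more importantly, the repair you sketch (a more economical representation of the \emph{same} $h$) cannot succeed, because the feasible set genuinely contains value-vectors that no $\dc_L$ function attains.

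Concretely, take $d=1$, $n=2$, $x_1=0$, $x_2=1$, and the feasible point $a_1=-L$, $b_1=0$, $a_2=0$, $b_2=L$, $z_1=0$, $z_2=L$, $\hat y_1=2L$, $\hat y_2=0$: all of (i)--(iii) are satisfied and the objective equals $2L$. Yet any $f\in\dc_L$ admits, for every $\epsilon>0$, a decomposition with $|\phi_1'|+|\phi_2'|\le L+\epsilon$ pointwise, hence $|f(0)-f(1)|\le L$, so $\hat D_2(\dc_L)=L<2L$. The interpolant produced from this tuple is $h(x)=\max(2L-Lx,\,L)-\max(0,\,Lx)$, whose slope on $(0,1)$ is $-2L$; since $|\phi_1'|+|\phi_2'|\ge |h'|$ for \emph{every} decomposition, $\|h\|=2L$ and no re-representation brings it into $\dc_L$ (your heuristic that the gradient near $\bm x_i$ is $\bm a_i-\bm b_i$ fails at points where the two maxima are attained at different indices). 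What is actually provable --- by combining your step (b) with the observation that every feasible tuple yields a $\pldc_{2L}$ interpolant matching the $\hat y_i$ --- is that the optimal value of \eqref{program:d_hat_dc} lies in $[\hat D_n(\dc_L),\, \hat D_n(\dc_{2L})] = [\hat D_n(\dc_L),\, 2\hat D_n(\dc_L)]$. This two-sided bound is all that is needed downstream, since $\lambda$ need only dominate a constant multiple of $\hat D_n(\dc_1)$, but neither your argument nor the paper's establishes the theorem's literal claim of equality.
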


\subsection{Structural Risk Minimisation}

To perform the SRM, we again utilize the structural result of Proposition $1$ to determine the values that the optimal estimate takes at each of the $\bm x_i$. The choice of the values is penalised by the seminorm as $\lambda L,$ where $\lambda = 24M\hat{D}_n(\dc_1),$ which may be computed using the program (\ref{program:d_hat_dc}). Note that the logarithmic term in the generalisation bound (\ref{gen_bound_BM}) is typically small, and is thus omitted in the following. This also has the benefit of rendering the objective function convex, as it avoids the $\log L$ term that would instead enter. If desired, an convex upper bound may be obtained for the same, for instance by noting that $\sqrt{ \max(1, \log_2 \|f\|)} \le 1 + \|f\|$. This has the effect of bumping up the value of $\lambda$ required by $O(M^2/\sqrt{n})$. However, theoretically this term is strongly dominated by the $\hat{D}_n$ (see \S\ref{sec:analysis}), while practically even the value $\lambda = \hat{D}_n(\dc_1) $ tends to produce overly smoothened solutions (see \S\ref{sec:expt}). 

With the appropriate choice of $\lambda,$ this yields the following convex optimisation problem, 
    \begin{align}\label{program:SRM}
        &\qquad \min_{\hat{y}_i, z_i, \bm a_i, \bm b_i, L} \sum_{i = 1}^n ( \hat{y_i} - y_i)^2 + \lambda L\\
        &\textrm{s.t.} \begin{cases}
    \hat y_i - \hat y_j + z_i  - z_j\geq \langle \bm a_j, \bm x_i-\bm x_j\rangle & i,j \in [n]  \\
    z_i - z_j  \geq \langle \bm b_j, \bm x_i-\bm x_j \rangle    &  i,j \in [n] \\
    \|\bm a_i\|_1 + \|\bm b_i\|_1 \leq L & \quad i\in[n] 
    \end{cases}\notag
    \end{align}
    
    Once again, in the above constraints, the first two are necessary and sufficient to ensure that a DC function taking the values $\hat{y}_i$ at $\bm x_i$ exists, with the vectors $\bm a_i, \bm b_i$ serving as witnesses for the subgradients of the convex parts of such a function at the $\bm x_i$, and the third constraint enforces that the function has seminorm at most $L$. Notice that the third condition effectively imposes $\ell_1$-regularisation on the weight vectors $\bm a_i, \bm b_i$. This causes these weights to be sparse.
    
    Finally, we may use the witnessing values $\hat{y}_i, z_i$ and $\bm a_i, \bm b_i,$ to construct, in the manner of Proposition \ref{prop:dc_interpolate}, the following PLDC function, which interpolates the values $\hat{y}_i$ to the entirety of the domain. Notice that since this function has the same loss as any DC function that satisfies $f(\bm x_i) = y_i$, this construction enjoys risk bounds constructed above.    \begin{equation}\label{eqn:pldc_estimate} \begin{split} \hat{f}(\bm x) \triangleq & \max_{i\in [n]} \langle \bm a_i,  \bm x-\bm x_i \rangle + \hat y_i + z_i \\ &\quad -\max_{i\in[n]} \langle \bm b_i, \bm x-\bm x_i \rangle + z_i \end{split}\end{equation}

    \subsection{Computational Complexity}
    
    \textbf{Training} First, we note that we may replace the constraints on the $1$-norms of the vectors $\bm a_i, \bm b_i$ in the above by linear constraints via the standard trick of writing the positive and negative parts of each of their components separately. Overall, this renders the program (\ref{program:d_hat_dc}) as a linear programming problem over $2n(2d + 1)$ variables, and with $n^2$ non-trivial constraints. Note that in our setting, one typically requires that $n \ge d$ - that one has more samples than dimensions. Via interior point methods, this problem may be solved in $O(n^5)$ time. %Importantly, this program is solved only once in the overall scheme.
    
    For the least squares loss $\ell(y,\hat{y}) = (y - \hat{y})^2,$ the second program (\ref{program:SRM}) is a convex quadratic program when the $1$-norm constraints are rewritten as above. The decision variables are the same as the first problem, with the addition of the single $L$ variable, and the constraints remain identical. Again, via interior point methods, these programs can be solved in $O(d^2n^5)$ time (see Ch. 11 of \citet{boyd2004convex}). The latter term dominates this complexity analysis. We note that in practice, these problems can be solved significantly faster than the above bounds suggest.
    
    \textbf{Speeding up training via a GPU implementation} an iterative solver for a modified version of the SRM in program~(\ref{program:SRM}) is given in Algorithm~(\ref{program:SRM_ADMM}) in the Appx~\ref{appx:admm_algor} via the ADMM method \cite{boyd2011distributed}. Each iteration of this algorithm can be distributed to $O(n^2+nd)$ parallel processors. A python implementation is given in our GitHub repository~\ref{link:github} which is compatible with GPU's. We note that a similar algorithm for that of Lipschitz convex regression is provided in \cite{balazs2016convex, mazumder2019computational} however not all the ADMM blocks are solved in closed form and require additional optimization in each iteration.
    
    \textbf{Prediction} By appending a $1$ to the input, and the constants $y_i + z_i - \langle \bm a_i , \bm x_i \rangle $ and $z_i - \langle \bm b_i, \bm x_i\rangle$ to $\bm a_i$ and $\bm b_i$, we can reduce the inference time problem to solving two maximum inner product search problems over $n$ vectors in $\mathbb{R}^{d+1}$. This is a well developed and fast primitive, e.g.~\citet{mips} provide a locality sensitive hashing based scheme that solves the problem in time that is sublinear in $n$.

\section{Analysis}\label{sec:analysis}
We note that this analysis makes extensive use of the work of \citet{balazs2015near, balazs2016convex} on convex and max-affine regression, with emphasis on the latter thesis, which contains certain refinements of the former paper. 

In this section, we assume that the true model $y = f(\bm x) + \varepsilon_i$ holds for a $f$ that is a DC function, and that we have explicit knowledge that $\| f\| \le L.$ Also recall our assumption that the distribution is supported on a set that is contained in the $\ell_\infty$ ball of radius $R$. We begin with a few preliminaries

\paragraph{A lower bound on risk} The minimax risk of estimation under the squared loss is $\Omega(n^{-2/d+2}).$ This follows by setting $p = 1$ (i.e., Lipschitzness) in Theorem 3.2 of \cite{gyorfi2006distribution}, which can be done since the standard constructions of obstructions to estimators used in showing such lower bounds all have regularly varying derivatives, and thus finite DC seminorms. %Note that all Lipschitz convex functions are DC functions. Thus, a minimax lower bound for our setting is given by minimax lower bounds for convex regression. For this we may utilize Theorem 5.1 of \cite{balazs2016convex}, which shows that the minimax squared error risk grows as $\Omega(n^{-\nicefrac{4}{d+4}}).$

\paragraph{PLDC solutions are not lossy} Lemma 5.2 of \cite{balazs2016convex} argues that for every convex $L$-Lipschitz functions $\phi$ with Lipschitz constant\footnote{\citet{balazs2016convex} presents an argument with the $2$-norm of subgradients bounded, but this can be easily modified to the case of bounded $1$-norm under bounds on $\|\bm x\|_\infty$.}, $\sup_x \|\nabla_* \phi\|_1$ bounded by $L$, there exists a Lipschitz max-affine function $\phi_{\mathrm{PL}}$ with maximum over $n$ pieces such that $\| \phi - \phi_{\mathrm{PL}}\| \le 36LR n^{-2/d}$. Recall that PLDC functions can be represented as differences of max-affine functions, and DC functions as differences of convex functions. Since the DC seminorm controls the $1$-norm of the subgradients, which dominates the $2$-norm, it follows that for every $\dc_L$ function $f$, there exists a $\pldc_{2L}$ function $f_{\mathrm{PL}}$ with $\| f - f_{\mathrm{PL}}\|_\infty\! =\! O(n^{-2/d}).$ Note that the resulting excess risk in the squared error due to using PLDC functions can thus be bounded as $O(n^{-4/d}),$ which is $o(n^{-2/d+2}),$ i.e., it is dominated by minimax risk.

\subsection{Statistical risk}

The bound (\ref{gen_bound_BM}) provides a instance specific control on the generalisation error of an estimate via the empirical maximum discrepancy $\hat{D}_n$. This section is devoted to providing generic bounds on the same under the assumption of i.i.d.~sampled data. We adapt the analysis of \cite{balazs2016convex} for convex regression in order to do so. The principal result is as follows. 

\begin{theorem} For distributions supported on a compact domain $\Omega \subset \{\|\bm x\|_\infty \le R\},$ with $n\ge d$, it holds that
    \[ \hat{D}_n(\dc_L) \le 60LR \left(\frac{d}{n}\right)^{2/d+4} \left( 1 + 2\frac{\log(n/d)}{d+4}\right).\] Further, if the ground truth $f \in \dc_L$, then with probability at least $1-\delta$ over the data, the estimator $\hat{f}$ of $(\ref{eqn:pldc_estimate})$ satisfies \begin{align*} &\mathbb{E}[ |Y - \hat{f}(\bm x)|^2] \le \mathbb{E}[|Y- f(\bm x)|^2]\\ &\quad + O((n/d)^{-2/d+4} \log(n/d) ) + O(\sqrt{\log(1/\delta)/n}).\end{align*}
\end{theorem}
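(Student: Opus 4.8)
The statement has two parts: a bound on the empirical maximum discrepancy $\hat D_n(\dc_L)$ (to be read as holding in expectation, or with high probability), and, granting this, an excess-risk bound for the estimate \eqref{eqn:pldc_estimate}. The discrepancy bound is the technical heart and follows the convex / max-affine regression analysis of \citet{balazs2015near, balazs2016convex}; the risk bound is then a structural-risk-minimisation argument resting on the uniform bound of Theorem~\ref{thm:srm_bound} together with the scaling identity $\hat D_n(\dc_L) = L\,\hat D_n(\dc_1)$ from program~\eqref{program:d_hat_dc}.

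For the discrepancy bound, the first step is to reduce to convex functions. Given $f = \phi_1 - \phi_2 \in \dc_L$ with a decomposition (nearly) attaining the infimum in the seminorm, each of $\phi_1, \phi_2$ is convex with $\ell_1$-subgradient norm at most $L$ everywhere, since $\|\bm v_1\|_1 \le \|\bm v_1\|_1 + \|\bm v_2\|_1 \le L$. Splitting $\sum_{i \le n/2} f(\bm x_i) - \sum_{i > n/2} f(\bm x_i)$ into the corresponding sums for $\phi_1$ and for $-\phi_2$ and taking suprema yields $\hat D_n(\dc_L) \le 2\,\hat D_n(\mathcal C_L)$, where $\mathcal C_L$ is the class of convex functions on $\Omega$ with $\ell_1$-Lipschitz constant $\le L$ (the $-\phi_2$ part contributes the same bound with the two sample halves exchanged, which is harmless under the i.i.d.\ assumption). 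The quantity $\hat D_n(\mathcal C_L)$ is then controlled as in \citet{balazs2016convex}: a convex $\phi \in \mathcal C_L$ is, up to $O(LRK^{-2/d})$ in sup norm on $\Omega$, a maximum of $K$ affine functions with $\ell_1$-slopes $\le L$ (a variant of Lemma~5.2 of \citet{balazs2016convex}); since that subclass has $O(Kd)$ parameters, its empirical maximum discrepancy is $O(LR\sqrt{Kd\log(nK)/n})$ by a union bound over a net, and so $\hat D_n(\mathcal C_L) = O(LRK^{-2/d} + LR\sqrt{Kd\log(nK)/n})$; balancing over $K$ gives $K \asymp (n/d)^{d/(d+4)}$ and the stated $60LR(d/n)^{2/(d+4)}(1 + 2\log(n/d)/(d+4))$ after tracking constants.

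For the risk bound, let $(\hat y_i, z_i, \bm a_i, \bm b_i, \hat L)$ solve \eqref{program:SRM} --- with the harmless box constraints $|\hat y_i| \le M$ appended, which the ground truth satisfies --- and let $\tilde f$ be the PLDC estimate \eqref{eqn:pldc_estimate} truncated to $[-M, M]$. By Proposition~\ref{prop:pldc-norm-bound} the gradients of the estimate lie among $\bm a_i, \bm b_i$, so its seminorm is $\le \hat L$; truncation keeps it in $\pldc$ without increasing the DC seminorm by more than a constant factor, preserves its values at the $\bm x_i$ (so $\tilde f$ has the same empirical error as the untruncated estimate), and makes $|\tilde f| \le M$, so Theorem~\ref{thm:srm_bound} applies to $\tilde f$. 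Since $f \in \dc_L$ is, up to an arbitrarily small perturbation, feasible for \eqref{program:SRM} --- take $\hat y_i = f(\bm x_i)$ and $\bm a_i, \bm b_i$ subgradients of a near-optimal DC decomposition, satisfying the first two constraints by convexity and the third by the seminorm bound --- optimality gives $\frac1n\sum_i (\hat y_i - y_i)^2 + \lambda\hat L \le \frac1n\sum_i(f(\bm x_i) - y_i)^2 + \lambda L$. Applying Theorem~\ref{thm:srm_bound} to $\tilde f$ and writing $12M\hat D_n(\dc_{2\|\tilde f\| + 2}) = 12M(2\|\tilde f\| + 2)\hat D_n(\dc_1) = O(M\hat L\,\hat D_n(\dc_1)) + O(M\hat D_n(\dc_1))$, the $\hat L$-term here cancels against the $\lambda\hat L$ from the penalty once $\lambda$ is the $\Theta(M\hat D_n(\dc_1))$ value prescribed by \eqref{gen_bound_BM} (the $\lambda = 24M\hat D_n(\dc_1)$ of the text, up to the constant from truncation), leaving $\mathbb E[(Y - \tilde f(\bm x))^2] \le \frac1n\sum_i(f(\bm x_i) - y_i)^2 + O(ML\,\hat D_n(\dc_1)) + T$; a second application of Theorem~\ref{thm:srm_bound}, to $f$, with $12M\hat D_n(\dc_{2L+2}) = O(ML\,\hat D_n(\dc_1))$, replaces the empirical error of $f$ by $\mathbb E[(Y - f(\bm x))^2]$, both uses holding on one event of probability $\ge 1 - \delta$ by uniformity. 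Finally, the first part gives $\hat D_n(\dc_1) = O(R(d/n)^{2/(d+4)}\log(n/d))$, so the $O(ML\,\hat D_n(\dc_1))$ terms are $O((n/d)^{-2/(d+4)}\log(n/d))$ (constant depending on $M, R, L$); and since optimality bounds $\hat L$, hence $\|\tilde f\|$, polynomially in $n/d$ (the degenerate case $\hat D_n(\dc_1) = 0$ being handled directly by the constraints), the remainder $T$ from \eqref{gen_bound_BM} is $O(\sqrt{\log(n/d)/n}) + O(\sqrt{\log(1/\delta)/n})$, the first of which is dominated by the main term since $2/(d+4) < 1/2$. As clipping never increases the squared risk when $|f| \le M$, the same bound holds for the clipped estimator one actually uses.

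The main obstacle is the discrepancy bound: the covering / approximation estimate for Lipschitz convex functions and the balancing of the number of affine pieces --- exactly the convex-regression machinery of \citet{balazs2015near, balazs2016convex} --- must be transported to the $\ell_1$-Lipschitz normalisation and to the maximum-discrepancy complexity used here. The remaining points (truncation is essentially seminorm-preserving, $\hat L$ is at most polynomial in $n/d$, and the $O(\sqrt{\log(n/d)/n})$ term is negligible) are routine but should be verified.
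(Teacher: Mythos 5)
Your proposal is correct and follows essentially the same route as the paper: reduce $\hat D_n(\dc_L)$ to twice the discrepancy of bounded $L$-Lipschitz convex functions (the paper additionally recenters the convex parts by $\psi_k(0)$ so they are uniformly bounded by $LR$, exploiting that constants cancel in the discrepancy), control that via the Bal\'azs convex-regression machinery, and then run the SRM chain --- uniform bound applied to $\hat f$, optimality of program~(\ref{program:SRM}) against the feasible point built from $f$, uniform bound applied to $f$. The only divergence is the middle step, where the paper invokes Dudley's bound together with the quoted metric-entropy estimate for $\mathcal{C}_{L,LR}$ and sets $\epsilon = 60LR(d/n)^{2/(d+4)}$, while you unroll the same content as a max-affine approximation plus a finite-class counting argument balanced over $K$ --- equivalent up to constants; your extra care about truncation to $[-M,M]$, the boundedness needed for the finite-class bound, and the size of $\hat L$ addresses details the paper's proof passes over silently.
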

\begin{proof}
Assume $f \in \dc_L$. Note that for any convex representation $f = \psi_1 - \psi_2$, we may instead construct a representation $f = \phi_1 - \phi_2 + c$ for a constant $c$ so that the resulting convex function $\phi_1$ and $\phi_2$ are uniformly bounded on the domain - indeed, this may be done by setting $c = f(0)$, and $\phi_k = \psi_k - \psi_k(0)$. The $\phi$s retain the bound on their Lipschitz constants, and thus are uniformly bounded by $LR$ over the domain. Thus, we may represent the class of $\dc_L$ functions as the sum of a constant, and a $\dc_L$ function whose convex parts are bounded. Call the latter class $\dc_{L,0}.$ Importantly, since the constants are cancelled in the computation of empirical maximum discrepancy, we can observe that \( \hat{D}_n(\dc_L) = \hat{D}_n(\dc_{L,0}).\)

The principle advantage of the above exercise is that the empirical discrepancy for DC functions with bounded convex parts can be controlled via the metric entropy of bounded Lipschitz convex functions, which have been extensively analysed by \citet{balazs2016convex}. This is summarised in the following pair of lemmata. The first argues that the discrepancy of $\dc_{L,0}$ functions is controlled by that of bounded Lipschitz convex functions. 
\begin{lemma}\label{lem:discrepancy_dc_to_c}
    Let $\mathcal{C}_{L,LR}$ be the set of convex functions that are $L$-Lipschitz and bounded by $LR$. Then \( \hat{D}_n(\dc_{L,0}) \le 2 \hat{D}_n(\mathcal{C}_{L,LR}).\)
\end{lemma}
The proof of the above is left to Appx~C. The second lemma, due to Dudley, is a generic method to allow control on the discrepancy. We state this for $\mathcal{C}_{L, LR}$ \begin{lemma}\label{lem:dudley}
    Let $\mathcal{H}_\infty(\mathcal{C}_{L, LR}, \epsilon)$ be the metric entropy of the class $\mathcal{C}_{L, LR}$ under the sup-metric $d(f,g) = \|f-g\|_\infty$. Then the empirical maximum discrepancy is bounded as \[ \hat{D}_n(\mathcal{C}_{L, LR}) \le \inf_{\epsilon > 0} \left( \epsilon + LR \sqrt{2\frac{\mathcal{H}_\infty(\mathcal{C}_{L, LR}, \epsilon)}{n}}  \right).\]
\end{lemma}

Finally, we invoke the control on metric entropy of bounded Lipschitz convex functions provided by \cite{balazs2015near, balazs2016convex}

\begin{namthm*}[\cite{balazs2015near, balazs2016convex}] For $\epsilon \in (0, 60LR],$  \[ \mathcal{H}_\infty(\mathcal{C}_{L,LR}, \epsilon) \le 3d\left(\frac{40 L R}{\epsilon}\right)^{d/2}\log \left( \frac{60LR}{\epsilon} \right). \]  
\end{namthm*}

Using the above in the bound of Lemma \ref{lem:dudley}, and choosing $\epsilon = (60 LR) (d/n)^{2/d+4}$ yields the claim. 

Control on the excess risk of the estimator follows readily. For any $\lambda \ge 24M \hat{D}_n(\dc_1),$ we have have, with high probability,\begin{align*}
    &\mathbb{E}[ (\hat{f} - Y)^2 ] \\
    \le\,\, &\hat{\mathbb{E}}[ (\hat{f} - Y)^2] + \lambda\|\hat{f}\| + 2\lambda + O(1/\sqrt{n})\\
    \le\,\, &\hat{\mathbb{E}}[ (f - Y)^2] + \lambda \| f\| + 2\lambda + O(1/\sqrt{n})\\
    \le\,\, &\mathbb{E}[ (f-Y)^2] + 2\lambda \|f\|  + 2\lambda + O(1/\sqrt{n}),\\
    \le\,\, &\mathbb{E}[ (f-Y)^2] + 2\lambda (L+1) + O(1/\sqrt{n})
\end{align*}
where the first and last inequalities utilise (\ref{gen_bound_BM}), while the second inequality is because $\hat{f}$ is the SRM solution. The claim follows on incorporating the bound on $\hat{D}_n$ developed above, and since we choose $\lambda$ proportional to the same.\vspace{-30pt}
\end{proof}
\textcolor{black}{\paragraph{On adaptivity} Notice that the argument for showing the excess risk bound proceeds by controlling $\hat{D}_n$. This allows the argument to adapt to the dimensionality of the data. Indeed, if the true law is supported on some low dimensional manifold, then the empirical discrepancy, which only depends on the observed data, grows only with this lower dimension. More formally, due to the empirical discrepancy being an empirical object, we can replace the metric entropy control over DC functions in $\mathbb{R}^d$ by metric entropy of DC functions supported on the manifold in which the data lies, which in turn grows only with the (doubling) dimension of this manifold. }

\paragraph{On suboptimality} Comparing to the lower bound, we find that the above rate is close to being minimax, although the (multiplicative) gap is about $n^{4/d^2}$ and diverges polynomially with $n$. In analogy with the observations of \citet{balazs2016convex} for max-affine regression, we suspect that this statistical suboptimality can be ameliorated by restricting the PLDC estimate to have some (precisely chosen) $K_n < n$ hyperplanes instead of the full $n$. However, as discussed in \S\ref{sec:related}, such restrictions lead to increase in the computational costs of training, and thus, we do not pursue them here. %It has been observed in max-affine regression by \citet{balazs2015near} that efficient methods that take $\max$ over $n$ hyperplanes suffer from the statistically loose risk convergence rates of $O(n^{-2/d})$, and to obtain the optimal rate of $O(n^{-4/d+4})$ it is necessary to restrict the number of hyperplanes to $ K \approx n^{d/d+4}$. 

\section{Experiments}\label{sec:expt}
In this section we apply our method to both synthetic and real datasets for regression and multi-class classification. The datasets were chosen to fit in the regime of $n\le 10^3, d \le 10^2$ as described in the introduction. All results are averaged over $100$ runs and are reported with the $95\%$ confidence interval. 

For the DC function fitting procedure, we note that that the theoretical value for the regularization weight tends to over-smooth the estimators. This behaviour is expected since the bound (\ref{gen_bound_BM}) is designed for the worst case. For all the subsequent experiments we make two modifications - since none of the values in the datasets observed are very large, we simply set $12M = 1,$ and further, we choose the weight, i.e. $\lambda$ in (\ref{program:SRM}), by cross validation over the set $2^{-j} \hat{D}_n(\dc_1)$ for $j \in [-8:1]$. Fig.~\ref{fig:2dplot} presents both these estimates in a simple setting where one can visually observe the improved fit. Note that this tuning is still minimal - the empirical discrepancy of $\dc_1$ fixes a rough upper bound on the $\lambda$ necessary, and we explore only a few different scales.

\begin{figure}[htb] 
\centering
\includegraphics[width = 0.25\textwidth]{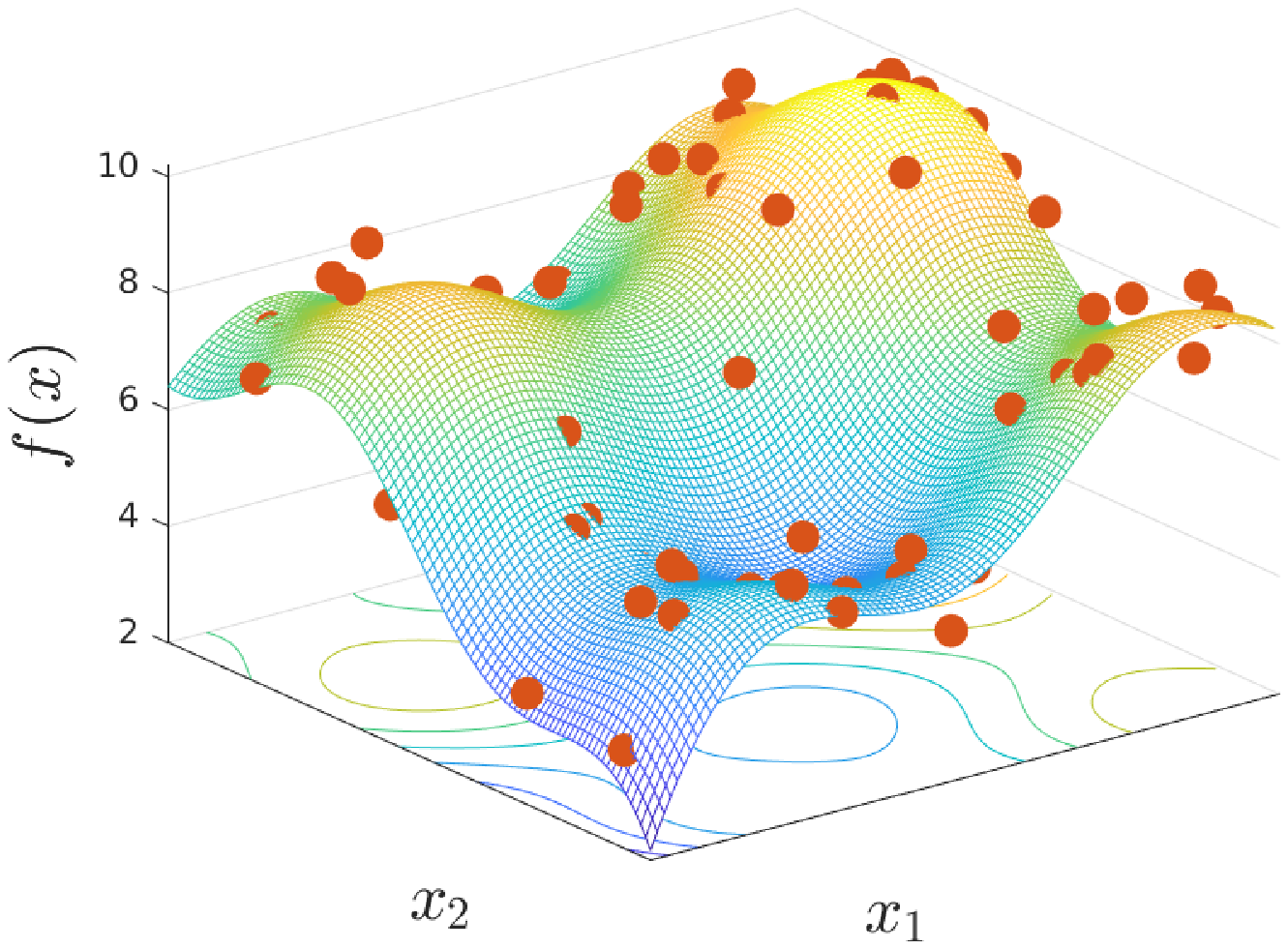}\\
\includegraphics[width = 0.24\textwidth]{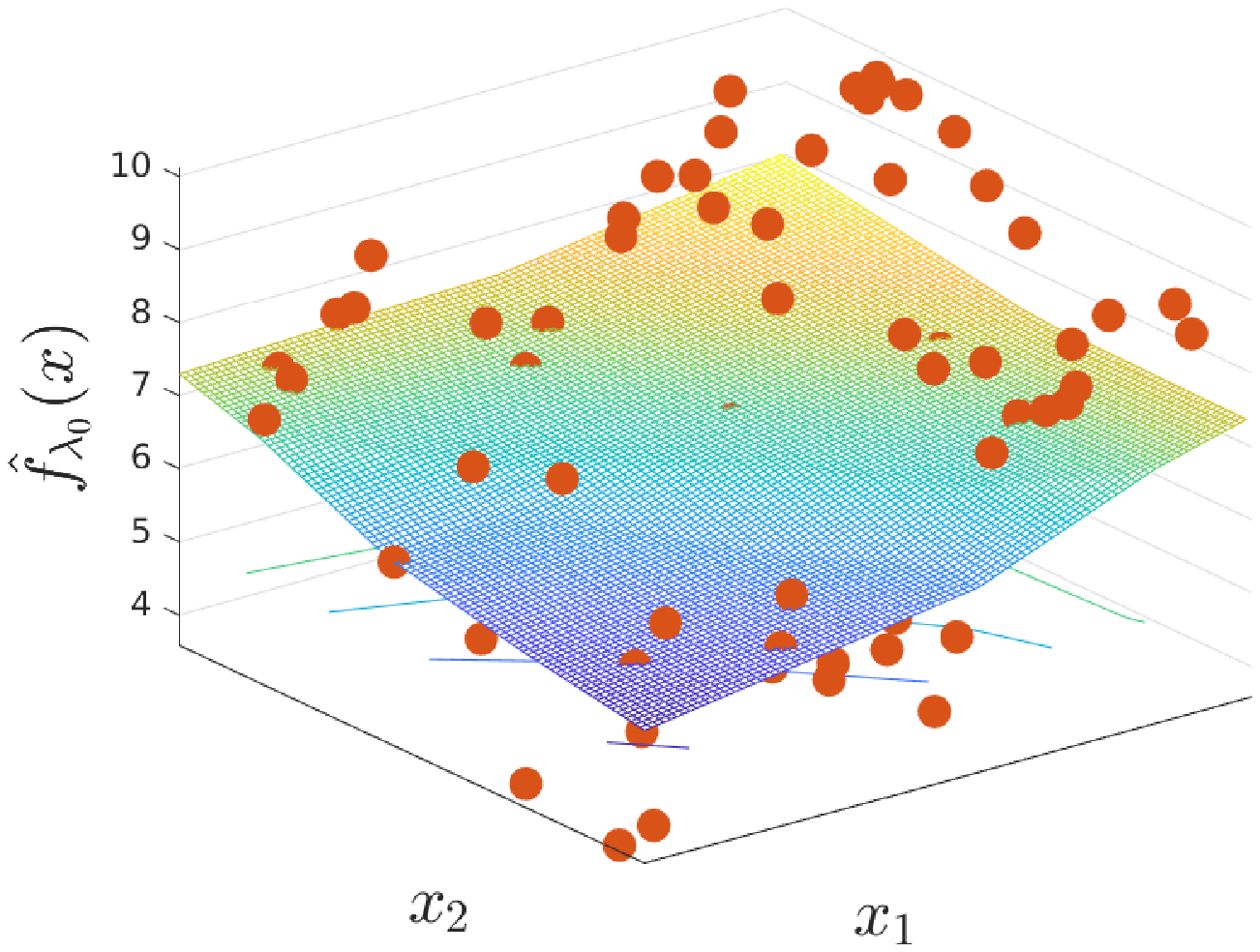}~
\includegraphics[width = 0.24\textwidth]{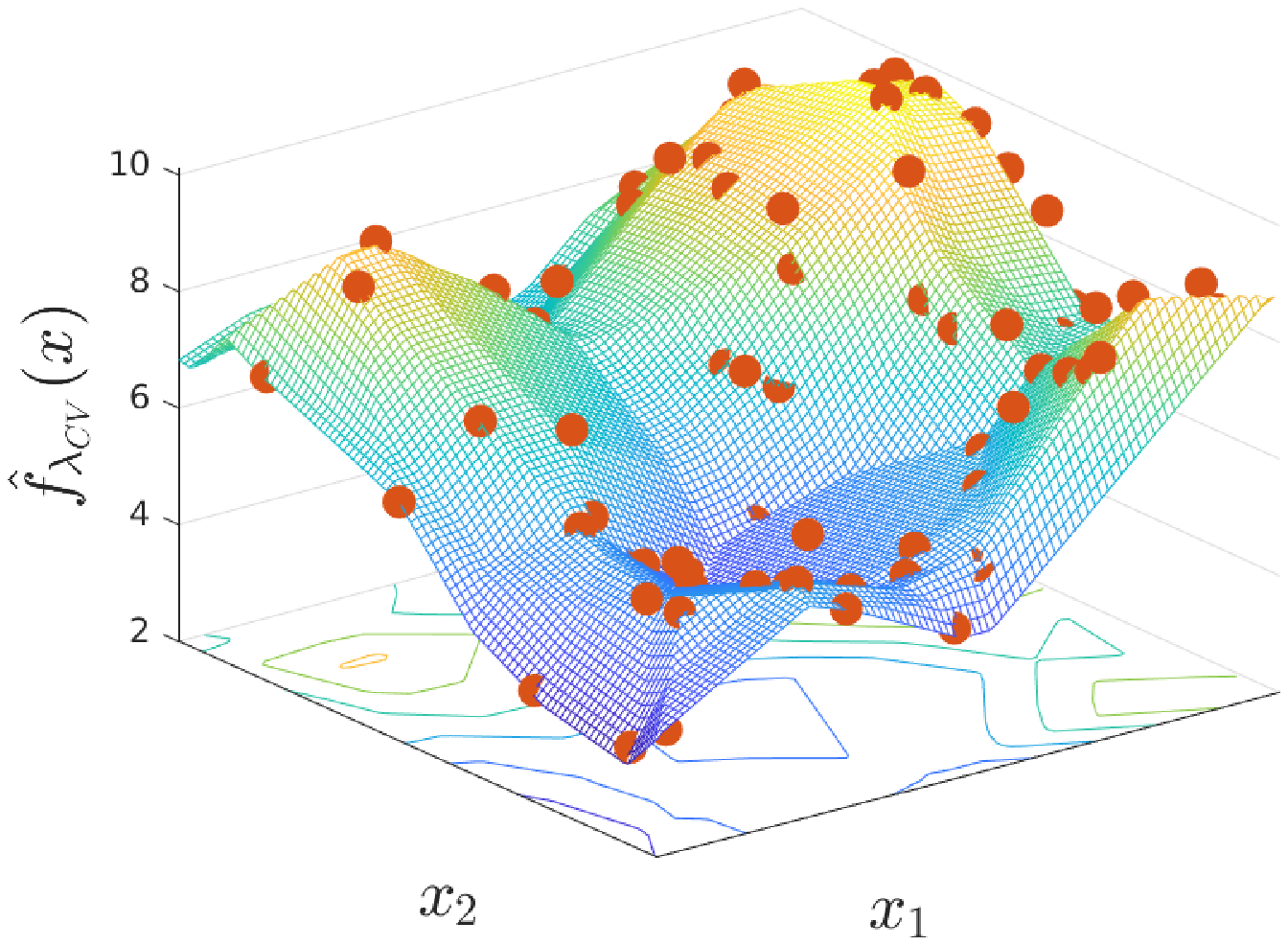}
\caption{\emph{Top} A two dimensional function along with the sampled points used for estimating the function; \emph{Bottom} learned DC function via L1 regression using only $\lambda= 2\hat D_n$ \emph{(left)}; using cross validation over $\lambda$ \emph{(right)}.}\label{fig:2dplot}
\end{figure}

For the \textbf{regression task} we use the $L_1$ empirical loss in our algorithm, instead of $L_2$. That is, the objective in (\ref{program:SRM}) is replaced by $\sum |y_i - \hat{y}_i|$. This change allows us to implement the fitting program as a linear programming problem and significantly speeds up computation. However, in the following we will only report the $L_2$ error of the solutions obtained thi way. We compare our method to a multi-layer perceptron (neural network) with variable number of hidden layers chosen from $1:10$ by $5$-fold cross validation, Multivariate Adaptive Regression Splines (MARS) and $K$-nearest neighbour($K$-NN) where Best value of $K$ was chosen by $5$-fold cross validation from $1:10$. 

For the multi-class \textbf{classification task} we adopt the multi-class hinge loss to train our model, i.e.~the loss
\begin{align*}
    \sum_i \sum_{j\neq y_i}^m\max( f_j(\bm x_i)-f_{y_i}(\bm x_i) +1, 0),
\end{align*}
where $m$ is the number of classes and $f_j$'s are DC functions. We compare with the same MLP as above but trained with the cross entropy Loss, KNN and a one versus all SVM. 

In both cases, we have used MATLAB Statistics and Machine learning Library for their implementation of MLP, KNN and SVM. For MARS we used the open source implementation in ARESLab toolbox implemented in MATLAB. Our code along with the other algorithms is available in our GitHub repository\footnote{\href{https://github.com/Siahkamari/Piecewise-linear-regression}{https://github.com/Siahkamari/Piecewise-linear-regression}\label{link:github}}\\.  %All results are averaged over $100$ runs and are reported along with the $95\%$ confidence interval. 

In each of the following tasks, we observe that our method performs competitively to all considered alternatives in almost every dataset, and often outperforms them, across the variation in dimensionality and dataset sizes. 
%This validates our method as being a statistically efficient and viable nonparametric modelling tool on realistic tasks.

% \vspace{-0.1cm}
\paragraph{Regression on Synthetic Data}
We generated data from the function, 
\begin{align*}
y &= f(\bm x) + \varepsilon\\
    f(\bm x) &= \sin\big(\frac{\pi}{\sqrt{d}}\sum_{j=1}^d x_{j}\big)+\big(\frac{1}{\sqrt{d}}\sum_{j=1}^d x_{j}\big)^2,
\end{align*}
where the $\bm x$ is sampled from a standard Gaussian, while $\varepsilon$ is a centred Gaussian noise with standard deviation of $0.25$.

We generate $50$ points for training. For testing we estimate the Mean Squared Error based on a test set of $5000$ points without the added noise. We normalize the MSE by variance of the values of test data and multiply by $100$. Results are presented in Fig.~\ref{fig:synthetic}. Observe that our algorithm consistently outperforms the competing methods, especially as we increase the dimension of the data. Furthermore our algorithm has lower error variance across the runs. 
%Important: figure label must go after the caption, otherwise the \ref command doesn't work properly
\begin{figure}[!htb] 
\includegraphics[width=8cm]{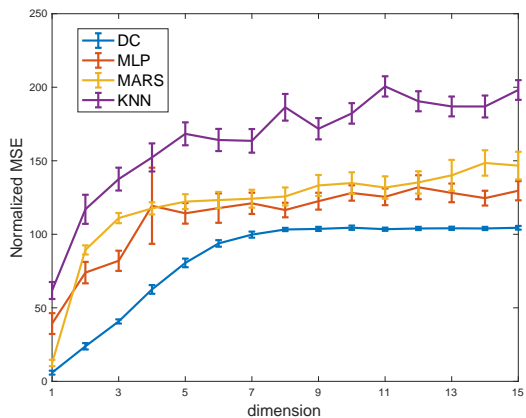}
\centering
\caption{Mean Squared Error in a regression task vs dimension of the data in the synthetic experiements. Note that both the value and the size of the error bars are consistently better than the competing methods}\label{fig:synthetic} \vspace{-10pt}
\end{figure}

\paragraph{Regression on Real Data}
We apply the stated methods to various moderately sized regression datasets that are available in the MATLAB statistical machine learning library. The results are presented in Fig.~\ref{fig:real_regression}. 

In the plot, the datasets are arranged so that the dimension increases from left to right. We observe that we do comparably to the other methods for some datasets and outperform in others. See Appx.~D for a description of each of the datasets studied.
\begin{figure}[htb] 
\includegraphics[width=8cm]{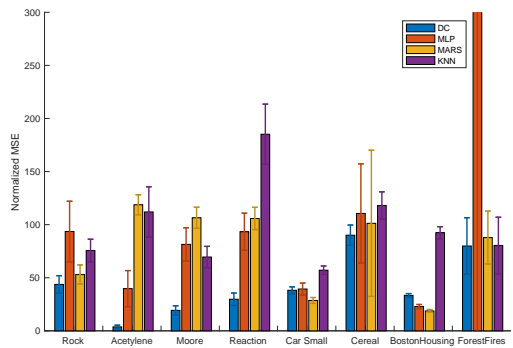}
\centering
\caption{Normalized Mean Squared Error in regression tasks.}\label{fig:real_regression} \vspace{-15pt}
\end{figure}

\paragraph{Multi-class classification}
We used popular UCI classification datasets for testing our classification algorithm. We repeated the experiments $100$ times We present the mean and $95\%$ C.I.s on a 2-fold random cross validation set, in Fig.~\ref{fig:real_class}. We observe to perform comparably to other algorithms on some datasets and outperform in others. \begin{figure}[!htb] 
\includegraphics[width=8cm]{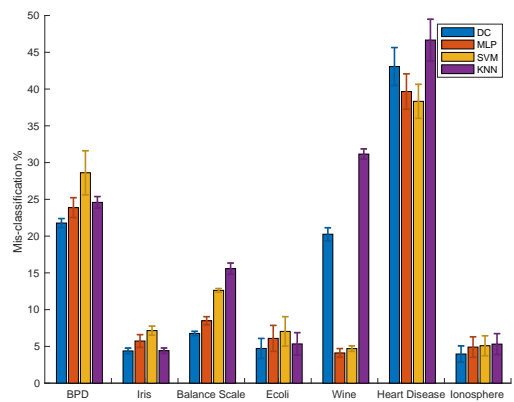}
\centering
\caption{Miss-Classification on UCI data sets.}\label{fig:real_class} \vspace{-10pt}
\end{figure}

\section{Discussion}
%  \vspace{0.05cm}
The paper proposes an algorithm to learn piecewise linear functions using difference of max-affine functions. Our model results in linear or convex programs which can be solved efficiently even in high dimensions. We have shown several theoretical results on expressivity of our model, as well as its statistical properties, including good risk decay and adaptivity to low dimensional structure, and have demonstrated practicality of our resulting model under regression and classification tasks.
%  \vspace{0.1cm}

\paragraph{Wider context} 
% We conclude by studying how our method may be placed in the large constellation of non-parametric methods. 
Non-parametric methods are most often utilised in settings with limited data in moderate dimensions. Within this context, along with strong accuracy, it is often desired that the method is fast, and is interpretable, especially in relatively large dimensions.

% As we vary the data dimension and dataset size, we observe that different design choices gain primacy. 
In settings with large datasets, accuracy is relatively easy to achieve, and speed at inference is more important. 
In low dimensional settings, this makes methods such as MARS attractive, due to their fast inference time, while in high-dimensions MLPs are practically preferred. In settings with low dimensionality and small datasets, interpretability and speed take a backseat due to the small number of features, while accuracy become the critical requirement, promoting use of kernelised or nearest neighbour methods. 

On the other hand, for small datasets in moderate to high dimensions, interpretability gains an increased emphasis. Within this context, our method results in a piecewise linear fit for which it is easy to characterise the locally important features, and further does so with relatively sparse weights via the $\ell_1$ regularisation on $\bm a_i, \bm b_i$. Further, since the suboptimality of our statistical risk  is controlled as $n^{O(1/d^2)},$ as the dimension increases, our method gets closer to the optimal accuracy. This thus represents the natural niche where application of DC function regression is appropriate.
%We further point out that in the experimental results, our method in fact tends to perform well in settings with limited data and both low and moderate feature dimensions. 
\section{Acknowledgment}
This research was supported by NSF CAREER Award 1559558,
CCF-2007350 (VS), CCF-2022446 (VS), CCF-1955981 (VS) and the Data Science Faculty Fellowship from the Rafik B. Hariri Institute.
\bibliography{biblo}

\begin{thebibliography}{27}
\providecommand{\natexlab}[1]{#1}
\providecommand{\url}[1]{\texttt{#1}}
\expandafter\ifx\csname urlstyle\endcsname\relax
  \providecommand{\doi}[1]{doi: #1}\else
  \providecommand{\doi}{doi: \begingroup \urlstyle{rm}\Url}\fi

\bibitem[Ba{\v{c}}{\'a}k \& Borwein(2011)Ba{\v{c}}{\'a}k and
  Borwein]{bacakdifference}
Ba{\v{c}}{\'a}k, M. and Borwein, J.~M.
\newblock On difference convexity of locally lipschitz functions.
\newblock \emph{Optimization}, 60\penalty0 (8-9):\penalty0 961--978, 2011.

\bibitem[Bal{\'a}zs(2016)]{balazs2016convex}
Bal{\'a}zs, G.
\newblock \emph{Convex Regression: Theory, Practice, and Applications}.
\newblock PhD thesis, University of Alberta, 2016.

\bibitem[Bal{\'a}zs et~al.(2015)Bal{\'a}zs, Gy{\"o}rgy, and
  Szepesv{\'a}ri]{balazs2015near}
Bal{\'a}zs, G., Gy{\"o}rgy, A., and Szepesv{\'a}ri, C.
\newblock Near-optimal max-affine estimators for convex regression.
\newblock In \emph{AISTATS}, 2015.

\bibitem[Bartlett \& Mendelson(2002)Bartlett and
  Mendelson]{bartlett2002rademacher}
Bartlett, P.~L. and Mendelson, S.
\newblock Rademacher and gaussian complexities: Risk bounds and structural
  results.
\newblock \emph{Journal of Machine Learning Research}, 3\penalty0
  (Nov):\penalty0 463--482, 2002.

\bibitem[Boyd \& Vandenberghe(2004)Boyd and Vandenberghe]{boyd2004convex}
Boyd, S. and Vandenberghe, L.
\newblock \emph{Convex optimization}.
\newblock Cambridge university press, 2004.

\bibitem[Boyd et~al.(2011)Boyd, Parikh, and Chu]{boyd2011distributed}
Boyd, S., Parikh, N., and Chu, E.
\newblock \emph{Distributed optimization and statistical learning via the
  alternating direction method of multipliers}.
\newblock Now Publishers Inc, 2011.

\bibitem[Cui et~al.(2018)Cui, Pang, and Sen]{cui2018composite}
Cui, Y., Pang, J.-S., and Sen, B.
\newblock Composite difference-max programs for modern statistical estimation
  problems.
\newblock \emph{SIAM Journal on Optimization}, 28\penalty0 (4):\penalty0
  3344--3374, 2018.

\bibitem[Friedman et~al.(1991)]{friedman1991multivariate}
Friedman, J.~H. et~al.
\newblock Multivariate adaptive regression splines.
\newblock \emph{The annals of statistics}, 19\penalty0 (1):\penalty0 1--67,
  1991.

\bibitem[Gasso et~al.(2009)Gasso, Rakotomamonjy, and Canu]{gasso2009recovering}
Gasso, G., Rakotomamonjy, A., and Canu, S.
\newblock Recovering sparse signals with a certain family of nonconvex
  penalties and dc programming.
\newblock \emph{IEEE Transactions on Signal Processing}, 57\penalty0
  (12):\penalty0 4686--4698, 2009.

\bibitem[Ghosh et~al.(2019)Ghosh, Pananjady, Guntuboyina, and
  Ramchandran]{ghosh2019max}
Ghosh, A., Pananjady, A., Guntuboyina, A., and Ramchandran, K.
\newblock Max-affine regression: Provable, tractable, and near-optimal
  statistical estimation.
\newblock \emph{arXiv preprint arXiv:1906.09255}, 2019.

\bibitem[Gy{\"o}rfi et~al.(2006)Gy{\"o}rfi, Kohler, Krzyzak, and
  Walk]{gyorfi2006distribution}
Gy{\"o}rfi, L., Kohler, M., Krzyzak, A., and Walk, H.
\newblock \emph{A distribution-free theory of nonparametric regression}.
\newblock Springer Science \& Business Media, 2006.

\bibitem[Hannah \& Dunson(2013)Hannah and Dunson]{hannah2013multivariate}
Hannah, L.~A. and Dunson, D.~B.
\newblock Multivariate convex regression with adaptive partitioning.
\newblock \emph{The Journal of Machine Learning Research}, 14\penalty0
  (1):\penalty0 3261--3294, 2013.

\bibitem[Hartman et~al.(1959)]{hartman1959functions}
Hartman, P. et~al.
\newblock On functions representable as a difference of convex functions.
\newblock \emph{Pacific Journal of Mathematics}, 9\penalty0 (3):\penalty0
  707--713, 1959.

\bibitem[Hildreth(1954)]{hildreth1954point}
Hildreth, C.
\newblock Point estimates of ordinates of concave functions.
\newblock \emph{Journal of the American Statistical Association}, 49\penalty0
  (267):\penalty0 598--619, 1954.

\bibitem[Holloway(1979)]{holloway1979estimation}
Holloway, C.~A.
\newblock On the estimation of convex functions.
\newblock \emph{Operations Research}, 27\penalty0 (2):\penalty0 401--407, 1979.

\bibitem[Holmes \& Mallick(2001)Holmes and Mallick]{holmes2001bayesian}
Holmes, C. and Mallick, B.
\newblock Bayesian regression with multivariate linear splines.
\newblock \emph{Journal of the Royal Statistical Society: Series B (Statistical
  Methodology)}, 63\penalty0 (1):\penalty0 3--17, 2001.

\bibitem[Horst \& Thoai(1999)Horst and Thoai]{horst1999dc}
Horst, R. and Thoai, N.~V.
\newblock Dc programming: overview.
\newblock \emph{Journal of Optimization Theory and Applications}, 103\penalty0
  (1):\penalty0 1--43, 1999.

\bibitem[Kripfganz \& Schulze(1987)Kripfganz and
  Schulze]{kripfganz1987piecewise}
Kripfganz, A. and Schulze, R.
\newblock Piecewise affine functions as a difference of two convex functions.
\newblock \emph{Optimization}, 18\penalty0 (1):\penalty0 23--29, 1987.

\bibitem[Magnani \& Boyd(2009)Magnani and Boyd]{magnani2009convex}
Magnani, A. and Boyd, S.~P.
\newblock Convex piecewise-linear fitting.
\newblock \emph{Optimization and Engineering}, 10\penalty0 (1):\penalty0 1--17,
  2009.

\bibitem[Mazumder et~al.(2019)Mazumder, Choudhury, Iyengar, and
  Sen]{mazumder2019computational}
Mazumder, R., Choudhury, A., Iyengar, G., and Sen, B.
\newblock A computational framework for multivariate convex regression and its
  variants.
\newblock \emph{Journal of the American Statistical Association}, 114\penalty0
  (525):\penalty0 318--331, 2019.

\bibitem[Ovchinnikov(2002)]{ovchinnikov2002max}
Ovchinnikov, S.
\newblock Max-min representation of piecewise linear functions.
\newblock \emph{Contributions to Algebra and Geometry}, 43\penalty0
  (1):\penalty0 297--302, 2002.

\bibitem[Piot et~al.(2014)Piot, Geist, and Pietquin]{piot2014difference}
Piot, B., Geist, M., and Pietquin, O.
\newblock Difference of convex functions programming for reinforcement
  learning.
\newblock In \emph{Advances in Neural Information Processing Systems}, pp.\
  2519--2527, 2014.

\bibitem[Shalev-Shwartz \& Ben-David(2014)Shalev-Shwartz and
  Ben-David]{shalev2014understanding}
Shalev-Shwartz, S. and Ben-David, S.
\newblock \emph{Understanding machine learning: From theory to algorithms}.
\newblock Cambridge university press, 2014.

\bibitem[Shrivastava \& Li(2014)Shrivastava and Li]{mips}
Shrivastava, A. and Li, P.
\newblock Asymmetric lsh (alsh) for sublinear time maximum inner product search
  (mips).
\newblock In Ghahramani, Z., Welling, M., Cortes, C., Lawrence, N.~D., and
  Weinberger, K.~Q. (eds.), \emph{Advances in Neural Information Processing
  Systems 27}, pp.\  2321--2329. Curran Associates, Inc., 2014.

\bibitem[Toriello \& Vielma(2012)Toriello and Vielma]{toriello2012fitting}
Toriello, A. and Vielma, J.~P.
\newblock Fitting piecewise linear continuous functions.
\newblock \emph{European Journal of Operational Research}, 219\penalty0
  (1):\penalty0 86--95, 2012.

\bibitem[Wahba(1990)]{wahba1990spline}
Wahba, G.
\newblock \emph{Spline models for observational data}, volume~59.
\newblock Siam, 1990.

\bibitem[Yuille \& Rangarajan(2002)Yuille and Rangarajan]{yuille2002concave}
Yuille, A.~L. and Rangarajan, A.
\newblock The concave-convex procedure (cccp).
\newblock In \emph{Advances in neural information processing systems}, pp.\
  1033--1040, 2002.

\end{thebibliography}
\bibliographystyle{icml2020}

\newpage

\newpage
\onecolumn

\newpage
\onecolumn
\begin{appendix}
\thispagestyle{empty}
\begin{center}{ \textbf{\LARGE Appendix to `Piecewise Linear Regression via a Difference of Convex Functions'}}\end{center}
%\singlecolumn

\section{Connection to $L_1$-regularised splines} \label{appx:spline}

The following proposition argues that in the univariate case fitting a DC function is equivalent to fitting an $L_1$-regularised spline. Note that in the bivariate case, $L_1$ splines are regularised by the Frobenius $1$-norm of the Hessian of the matrix, while in the multivariate case it is similarly regularised by a higher derivative tensor. Thus, our method is an alternate generalisation of the $L_1$-spline in the univaraite case.

\begin{proposition} In the univariate setting, solving regression with the $L_1$-spline objective,
\begin{align}\label{prog:spline1}
    \inf_{f\in C^2} \sum_{i=1}^n \ell(f( x_i), y_i) + \lambda \int_0^1 |f^{''}(s)| d s.
\end{align}
is equivalent to regression with difference of convex functions penalized by their DC seminorm, \emph{with a free linear term}
\begin{align}\label{prog:dc1}
    \min_{\phi_1, \phi_2 \ \text{convex} , a} &\sum_{i=1}^n \ell(\phi_1(x_i) - \phi_2(x_i) + a x   ,y_i) + 2\lambda \sup_{x\in [0, 1]} | \phi_1'(x)| + | \phi_2' ( x)| 
\end{align}
\end{proposition}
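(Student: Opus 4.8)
The claim is an equivalence of two optimisation problems in the univariate setting, so the natural strategy is to exhibit a correspondence between feasible points of (\ref{prog:spline1}) and (\ref{prog:dc1}) that preserves the objective value, in both directions. The plan is to start from the spline side: given $f \in C^2$ on $[0,1]$, use the classical fact (a one-dimensional incarnation of the Hartman characterisation quoted in \S\ref{sec:dc_background}) that $f$ decomposes as a difference of convex functions by separating the positive and negative parts of $f''$. Concretely, I would write $f''(s) = (f''(s))_+ - (f''(s))_-$, and define $\phi_1(x) = \int_0^x \int_0^t (f''(s))_+\,ds\,dt$ and $\phi_2(x) = \int_0^x \int_0^t (f''(s))_-\,ds\,dt$, so that $\phi_1, \phi_2$ are convex (nonnegative second derivatives) and $f = \phi_1 - \phi_2 + a x + b$ where $ax+b$ absorbs the linear part $f(0) + f'(0)x$; the constant $b$ is irrelevant to both the loss (it can be folded into $a x$ up to shifting, or simply ignored since $\phi_i$ are determined only up to affine terms — here I should be slightly careful and note the free linear term $a$ in (\ref{prog:dc1}) plus the freedom to add constants to $\phi_1,\phi_2$ covers exactly the affine ambiguity). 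Then $\sup_x |\phi_1'(x)| = \phi_1'(1) = \int_0^1 (f''(s))_+\,ds$ and likewise for $\phi_2$, since $\phi_i'$ is nondecreasing and $\phi_i'(0)=0$, so $\phi_1'(1) + \phi_2'(1) = \int_0^1 |f''(s)|\,ds$, matching the regulariser up to the factor $2\lambda$ versus $\lambda$ — which is consistent because the DC penalty in (\ref{prog:dc1}) is written with a leading $2\lambda$ multiplying the sum of two sup-norms that together equal $\int|f''|$... wait, I need to recheck the bookkeeping: $\sup|\phi_1'| + \sup|\phi_2'| = \int_0^1|f''|$ requires that $\phi_i'(0)=0$; choosing the antiderivatives so that $\phi_i'(0)=0$ is exactly what the free linear term $a$ buys us, so this matching is clean and gives objective value $\sum \ell(f(x_i),y_i) + 2\lambda \int_0^1 |f''|$. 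Hmm, that is $2\lambda$, not $\lambda$ — so the precise statement must be read as: the two problems have the same \emph{minimiser structure} / are equivalent up to rescaling $\lambda$, or more likely I have the normalisation of $\sup|\phi_i'|$ slightly off and should track it carefully; in any case the correspondence is the content and the constant is a routine check.

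For the converse direction, given convex $\phi_1, \phi_2$ (which in one dimension are automatically continuous and differentiable a.e., with nondecreasing derivatives of locally bounded variation) and a scalar $a$, I would set $f = \phi_1 - \phi_2 + ax$. A priori $f$ need not be $C^2$, so the equivalence must be interpreted in the sense that the infimum in (\ref{prog:spline1}) over $C^2$ functions is approached: one smooths $\phi_1, \phi_2$ by mollification to get $C^2$ convex functions $\phi_i^{(\epsilon)}$ with $\phi_i^{(\epsilon)} \to \phi_i$ uniformly on $[0,1]$ and $\int_0^1 (\phi_i^{(\epsilon)})'' = \phi_i'(1^-) - \phi_i'(0^+) \le \sup|\phi_i'| \cdot$(something) — more precisely $\int_0^1 |(\phi_i^{(\epsilon)})''| = \int (\phi_i^{(\epsilon)})'' \to$ total variation of $\phi_i'$ on $[0,1]$, which is $\le 2\sup_{[0,1]}|\phi_i'|$, with equality achievable by choosing the additive linear freedom optimally. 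The loss terms converge by uniform convergence and continuity of $\ell$. So $\inf$ over $C^2$ $\le$ value of (\ref{prog:dc1}), and combined with the first direction the two infima coincide. I would present this as: the first direction shows (\ref{prog:spline1}) $\ge$ (\ref{prog:dc1}) (every spline gives a DC point of no greater cost, after optimising the linear term), and the second shows (\ref{prog:spline1}) $\le$ (\ref{prog:dc1}) (every DC point is a uniform limit of splines of asymptotically no greater cost).

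\textbf{Main obstacle.} The genuinely delicate point is matching the regularisers exactly, i.e.\ establishing that $\inf$ over additive affine adjustments of $\big(\sup_{[0,1]}|\phi_1'| + \sup_{[0,1]}|\phi_2'|\big)$ equals $\mathrm{TV}_{[0,1]}(\phi_1') + \mathrm{TV}_{[0,1]}(\phi_2')$ (equivalently $\int_0^1|f''|$ for the smooth representative), and reconciling this with the factor of $2$. Since $\phi_i'$ is nondecreasing, $\mathrm{TV}_{[0,1]}(\phi_i') = \phi_i'(1) - \phi_i'(0)$, while $\sup_{[0,1]}|\phi_i'| = \max(|\phi_i'(0)|, |\phi_i'(1)|)$; adding a linear term $c_i x$ shifts $\phi_i'$ by the constant $c_i$, and the optimal choice $c_i = -\tfrac12(\phi_i'(0)+\phi_i'(1))$ makes $\sup|\phi_i'| = \tfrac12(\phi_i'(1)-\phi_i'(0)) = \tfrac12 \mathrm{TV}$. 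Hence $\sum_i \sup|\phi_i'|$, minimised over the linear freedom, equals $\tfrac12 \sum_i \mathrm{TV}(\phi_i') = \tfrac12 \int_0^1|f''|$, and so $2\lambda \sum_i \sup|\phi_i'| = \lambda \int_0^1 |f''|$ — which is exactly the factor in (\ref{prog:spline1}). This resolves the apparent discrepancy and is the crux of the argument; the rest (the DC decomposition via $(f'')_\pm$, the mollification, and passing the loss terms through uniform limits) is standard. I would be careful to note that moving the linear adjustment between $\phi_1$, $\phi_2$ and the free term $a$ does not change $f = \phi_1 - \phi_2 + ax$, so this optimisation over linear pieces is "free" and already built into the formulation of (\ref{prog:dc1}).
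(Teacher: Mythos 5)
Your proposal is correct and follows essentially the same route as the paper's proof: the crux in both is that optimising the free affine/linear terms recentres each $\phi_i'$ so that $\sup_{[0,1]}|\phi_i'|$ drops to $\tfrac12\bigl(\phi_i'(1)-\phi_i'(0)\bigr)$ (the paper's min--max lemma with $a_i=-c_i/2$ is exactly your optimal shift), and that any extra common convexity added to both parts (the paper's $g\ge 0$) only inflates the penalty, so the minimal DC penalty of a fixed $f$ is $\tfrac12\int_0^1|f''|$ and the factor of $2$ cancels. Your mollification argument for passing from general convex $\phi_i$ back to $C^2$ functions is a welcome extra degree of care that the paper's proof elides.
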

\begin{proof}
Suppose $f$ is the solution to (\ref{prog:spline1}) and $\phi_1 - \phi_2 + ax  = f$, where $\phi_1$ and $\phi_2$ are convex. Hence for some $a_1, a_2\in \mathbb{R}$: 
\begin{align*}
    \phi_1'(x) &= a_1 + \int_{0}^{x}  f''^+(s) + g(s) \,ds  \\
    \phi_2'(x) &= a_2 + \int_{0}^{x}  f''^-(s) + g(s)\, ds  \\
    a &= a_2 - a_1 + f'(0),
\end{align*}
where $g(s) \geq 0$ otherwise the second differential of $\phi_1(s)$ or $\phi_2(s)$ would be negative which contradicts convexity. Since the error term in (\ref{prog:dc1}) is invariant to choices of $a_1, a_2$ and $g$, the minimization in (\ref{prog:dc1}) seeks to minimize only the regularization term. Thus, 
\begin{align*}
&\min_{\substack{ \phi_1, \phi_2 \textrm{ convex} \\ \phi_1 - \phi_2 +ax= f}}\sup_x \bigg(|\phi_1'(x)| + |\phi_2'(x)|\bigg)\\ 
=\,\,&\min_{g\geq 0}\min_{a_1,a_2}\sup_x \bigg( \Big|a_1+\int_{0}^{x} f''^+(s)+g(s)ds\Big| + \Big|a_2+\int_{0}^{x} f''^-(s)+g(s)ds\Big|\bigg) \\
\overset{*}{=}\,\, &\min_{g\geq 0}\min_{a_1,a_2} \max\bigg(\Big|a_1\Big| + \Big|a_2\Big|,\ \Big|a_1+\int_{0}^{1} f''^+(s)+g(s)ds\Big| + \Big|a_2+\int_{0}^{1} f''^-(s)+g(s)ds\Big|\bigg)\\
\overset{**}{=}\,\,&\min_{g\geq 0}\bigg(\frac{1}{2}\Big|\int_{0}^{1} f''^+(s)+g(s)ds\Big|  + \frac{1}{2}\Big|\int_{0}^{1} f''^-(s)+g(s)ds\Big| \bigg)\\
=\,\, &\min_{g\geq 0} \bigg(\frac{1}{2}\int_{0}^{1} |f''(s)| ds +  \int_{0}^{1} g(s) ds\bigg)\\
=\,\,& \frac{1}{2}\int_{0}^{1} |f''(s)| ds.
\end{align*}
Therefore (\ref{prog:spline1}) and (\ref{prog:dc1}) are equivalent as they have the same objective.

$(*)$ is true since the expressions inside the absolute value are convex and monotonic in terms of $x$, which causes the inner maximization to take place at endpoints of $x$, i.e. $x=0$ or $x=1$. To show $(**)$ consider,
\begin{align*}
    c_1 &= \int_{0}^{1} f''^+(s)+g(s)ds \\
    c_2 &= \int_{0}^{1} f''^-(s)+g(s)ds.
\end{align*}
It's sufficient to show
\begin{align}\label{equ:CC}
   C = \min_{a_1, a_2} \max\Big (|a_1| + |a_2|, |a_1+c_1| + |a_2+c_2|\Big) = \frac{|c_1| + |c_2|}{2}.
\end{align}
by choosing $a_1 = -\frac{c_1}{2}$ and $a_2 = -\frac{c_2}{2}$ the value on the right hand side of (\ref{equ:CC}) is achieved. Thus $C \leq \frac{|c_1| + |c_2|}{2} $. To show this upper bound is tight, assume $C < \frac{|c_1| + |c_2|}{2}$, hence both options for the max player should be smaller than this, e.g:
\begin{align*}
    |a_1| + |a_2| < \frac{|c_1| + |c_2|}{2}.
\end{align*}
However this in turn causes: 
\begin{align*}
    C \geq |a_1+c_1| + |a_2+c_2| &\geq |c_1| - |a_1| + |c_2| - |a_2| \\
    &> \frac{|c_1| + |c_2|}{2},
\end{align*}
which contradicts the assumption $C < \frac{|c_1| + |c_2|}{2}$. 
\end{proof}

%\newpage

\section{Proofs Omitted from \S\ref{sec:dc_background}}

\subsection{Proof that the DC seminorm is indeed a seminorm}\label{appx:seminorm}
  
\begin{proposition}
The functional \[ \|f\| := \min_{\substack{ \phi_1, \phi_2 \textrm{ convex} \\ \phi_1 - \phi_2 = f} } \sup_x \sup_{\bm v_i \in \partial_* \phi_i(\bm x)} \|\bm v_1 \|_1 + \| \bm v_2\|_1 \] is a seminorm over the class $\mathcal{DC}(\mathbb{R}^d)$. Further, if DC functions are identified up to a constant, it is a norm.
\begin{proof} $ $ \newline
\emph{Homogenity:} Since non-negative scalings of convex functions are convex, if $f = \phi_1 - \phi_2$, then for any real $a$, $af = a\phi_1 - a\phi_2$ is a DC representation if $a > 0$, and $f = -a\phi_2 - (-a\phi_1)$ is a DC representation if $a \le 0.$ Trivially, if $\bm v \in \partial_* \phi_i$ then $a\bm v \in \partial_* (a\phi_i)$ and since $\| \cdot\|_1$ is a norm, it holds that $\|a \bm v\|_1 = |a| \|\bm v\|_1$. Thus, \begin{align*}\|a f\| &= \min_{\substack{ \phi_1, \phi_2 \textrm{ convex} \\ \phi_1 - \phi_2 = af} } \sup_x \sup_{\bm v_i \in \partial_* \phi_i(\bm x)} \|\bm v_1 \|_1 + \| \bm v_2\|_1 \\
&= \min_{\substack{ \phi_1, \phi_2 \textrm{ convex} \\ \phi_1 - \phi_2 = f} } \sup_x \sup_{\bm v_i \in \partial_* \phi_i(\bm x)} \|(\pm a) \bm v_1 \|_1 + \| (\pm a) \bm v_2\|_1 \\
&= \min_{\substack{ \phi_1, \phi_2 \textrm{ convex} \\ \phi_1 - \phi_2 = f} } \sup_x  \sup_{\bm v_i \in \partial_* \phi_i(\bm x)} |a| ( \|\bm v_1 \|_1 + \| \bm v_2 \|_1 )\\
&= |a| \|f\|.
\end{align*}

\emph{Triangle Inequality:} For a DC function $f$, let $\mathscr{C}_f$ be the set of pairs of continuous convex functions whose difference is $f$, i.e. \[ \mathscr{C}_f := \{ (\phi_1, \phi_2) \textrm{ cont., convex }: \phi_1 - \phi_2 = f \}.\]

We first argue that for DC functions $f,g$,  \begin{align*} \mathscr{C}_{f+g} = \{ (\psi_1, \psi_2): \exists (\phi_1, \phi_2) \in \mathscr{C}_f, (\widetilde{\phi}_1, \widetilde{\phi}_2) \in \mathscr{C}_g  \textrm{ s.t. } \psi_1 = \phi_1 + \widetilde{\phi}_1, \psi_2 = \phi_2 + \widetilde{\phi}_2  \} \end{align*}

This follows because any decomposition $\psi_1 - \psi_2 = f+g,$ and $\phi_1 - \phi_2 = f$ gives decomposition $(\psi_1 + \phi_2, \psi_2 + \phi_1) \in \mathscr{C}_g$ and vice versa, so $\mathscr{C}_{f+g}$ lies within the right hand side above. On the other hand, for any pair $(\phi_1, \phi_2) \in \mathscr{C}_f$ and $(\widetilde{\phi}_1, \widetilde{\phi}_2) \in \mathscr{C}_g,$ clearly $(\phi_1 + \widetilde{\phi}_1, \phi_2 + \widetilde{\phi}_2) \in \mathscr{C}_{f+g},$ so the right hand side lies within the $\mathscr{C}_{f+g}$. 

As a consequence, \begin{align*}
    &\,\,\,\,\|f + g\|\\
    &=\min_{ (\psi_1, \psi_2) \in \mathscr{C}_{f+g}  } \sup_x \sup_{\bm u_i \in \partial_* \psi_i(\bm x)} \|\bm u_1 \|_1 + \| \bm u_2\|_1 \\
              &=\min_{ \substack{(\phi_1, \phi_2) \in \mathscr{C}_f \\ (\widetilde{\phi}_1, \widetilde{\phi}_2) \in \mathscr{C}_g } } \sup_x  \sup_{\substack{\bm v_i \in \partial_* \phi_i(\bm x)\\ \widetilde{{\bm v}}_i \in \partial_* \tilde{\phi}_i(\bm x)}}\|  \bm v_1 +  \widetilde{\bm v}_1\|_1 +\|  \bm v_2 + \widetilde{\bm v}_2\|_1\\
              &\le \min_{ \substack{(\phi_1, \phi_2) \in \mathscr{C}_f \\ (\widetilde{\phi}_1, \widetilde{\phi}_2) \in \mathscr{C}_g } } \sup_x \sup_{\substack{\bm v_i \in \partial_* \phi_i(\bm x)\\ \widetilde{{\bm v}}_i \in \partial_* \tilde{\phi}_i(\bm x)}}\|  \bm v_1\|_1 + \| \widetilde{\bm v}_1\|_1 +\|  \bm v_2 \|_1 + \| \widetilde{\bm v}_2\|_1 \\
              &\le \min_{(\phi_1, \phi_2) \in \mathscr{C}_f} \sup_x \sup_{\substack{\bm v_i \in \partial_* \phi_i(\bm x)}}\|  \bm v_1 \|_1 + \| \bm v_2\|_1 +  \min_{(\widetilde{\phi}_1, \widetilde{\phi}_2) \in \mathscr{C}_g} \sup_x \sup_{\substack{ \widetilde{{\bm v}}_i \in \partial_* \tilde{\phi}_i(\bm x)}} \|\widetilde{\bm v}_1\|_1 + \|\widetilde{\bm v}_2\|_1 \\
              &= \|f\| + \|g\|.
\end{align*}

\emph{Positive Definite up to constants:} We note that if $\|f\| = 0,$ then $f$ is a constant function. Indeed, the norm being zero implies that there exists a DC representation $f = \phi_1 - \phi_2$ such that for all $\bm x$, the biggest subgradients in $\partial_* \phi_i(\bm x)$ have $0$ norm, ergo $\|\nabla \phi_1 (x) \|_1 = \|\nabla \phi_2(x)\|_1 = 0,$ and thus, $\nabla f = 0$ everywhere. Conversely, clearly for every constant function, $\|c\|= 0$. Thus, the norm is zero iff the function is a constant. 
Lastly, note that for any $f$ and a constant $c$, $\|f\| = \|f + c\|.$ Thus, equating DC functions that differ only by a constant turns the above seminorm into a norm.  
\end{proof}
\end{proposition}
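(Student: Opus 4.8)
The plan is to verify the three defining properties of a seminorm---nonnegativity, positive homogeneity, and subadditivity---and then to describe exactly when the functional vanishes, which yields the ``norm up to constants'' statement. Nonnegativity is immediate, since the quantity minimized over representations is a supremum of sums of $\ell_1$-norms. For positive homogeneity I would, for each scalar $a$, exhibit an objective-scaling correspondence between DC representations of $f$ and of $af$: if $f=\phi_1-\phi_2$ with $\phi_i$ convex, then $af=(a\phi_1)-(a\phi_2)$ is a valid representation when $a\ge 0$, while $af=(-a)\phi_2-(-a)\phi_1$ is one when $a<0$; in both cases the two convex parts have their subdifferentials scaled by $a$, and $\|a\bm v\|_1=|a|\,\|\bm v\|_1$, so the objective of each representation of $af$ is $|a|$ times that of the corresponding representation of $f$. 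This gives $\|af\|\le|a|\,\|f\|$ for all $a$ and $f$; applying the same inequality to $af$ with the scalar $1/a$ (for $a\ne 0$) yields the reverse, and $a=0$ is trivial.

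For the triangle inequality the structural observation is that the DC representations of $f+g$ are exactly the coordinatewise sums of representations of $f$ and of $g$: if $f=\phi_1-\phi_2$ and $g=\widetilde\phi_1-\widetilde\phi_2$ then $(\phi_1+\widetilde\phi_1)-(\phi_2+\widetilde\phi_2)=f+g$ has convex parts, and conversely any representation of $f+g$ can be combined with one of $f$ to produce one of $g$. I would then bound the objective of the summed representation $(\phi_1+\widetilde\phi_1,\phi_2+\widetilde\phi_2)$ of $f+g$: by the Moreau--Rockafellar sum rule---valid because all parts are finite-valued convex functions on $\mathbb{R}^d$, hence continuous---every $\bm u_k\in\partial_*(\phi_k+\widetilde\phi_k)(\bm x)$ splits as $\bm v_k+\widetilde{\bm v}_k$ with $\bm v_k\in\partial_*\phi_k(\bm x)$ and $\widetilde{\bm v}_k\in\partial_*\widetilde\phi_k(\bm x)$, so by the triangle inequality for $\|\cdot\|_1$, pointwise in $\bm x$ and in the subgradients, $\|\bm u_1\|_1+\|\bm u_2\|_1\le(\|\bm v_1\|_1+\|\bm v_2\|_1)+(\|\widetilde{\bm v}_1\|_1+\|\widetilde{\bm v}_2\|_1)$. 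Taking the supremum over $\bm x$ and all subgradients decouples the right side into the two representation objectives, and finally taking infima over representations of $f$ and of $g$ separately gives $\|f+g\|\le\|f\|+\|g\|$.

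To characterize the kernel I would observe that $\|f\|$ dominates the $\ell_\infty$-Lipschitz constant of $f$: for any representation $f=\phi_1-\phi_2$ and any $\bm x,\bm y$, convexity together with $|\langle\bm v,\bm h\rangle|\le\|\bm v\|_1\|\bm h\|_\infty$ gives $|f(\bm x)-f(\bm y)|\le\left(\sup_{\bm z}\sup_{\bm v\in\partial_*\phi_1(\bm z)}\|\bm v\|_1+\sup_{\bm z}\sup_{\bm v\in\partial_*\phi_2(\bm z)}\|\bm v\|_1\right)\|\bm x-\bm y\|_\infty$, and minimizing over representations shows $f$ is $\|f\|$-Lipschitz in $\ell_\infty$. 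Hence $\|f\|=0$ forces $f$ to be constant; conversely every constant admits a representation with both convex parts constant, so its seminorm is $0$. Since adding a constant to $\phi_1$ alters neither its subdifferential nor the objective, $\|f+c\|=\|f\|$ for every constant $c$, so on the quotient of $\mathcal{DC}(\mathbb{R}^d)$ by the constant functions the functional separates points and is a genuine norm.

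I expect the triangle inequality to be the main obstacle, and within it two points need care: the nontrivial inclusion $\partial_*(\phi_k+\widetilde\phi_k)(\bm x)\subseteq\partial_*\phi_k(\bm x)+\partial_*\widetilde\phi_k(\bm x)$ that lets a subgradient of a sum be split, and the ordering of quantifiers---one must pass to the supremum over $\bm x$ and subgradients before taking infima over representations, so that the bound cleanly separates as $\|f\|+\|g\|$. The remaining properties are essentially bookkeeping. A minor technical point is whether the infimum in the definition is attained, so that writing ``$\min$'' is literally correct; if one prefers not to settle this, the whole argument goes through verbatim with $\inf$ in place of $\min$ and $\varepsilon$-optimal representations.
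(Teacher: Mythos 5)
Your proof is correct and follows essentially the same route as the paper's: the same scaling correspondence for homogeneity, the same identification of representations of $f+g$ with coordinatewise sums of representations of $f$ and of $g$ for the triangle inequality, and the same underlying fact (the seminorm controls the gradient) for the kernel. You are in fact more careful in two places --- you make explicit the Moreau--Rockafellar inclusion $\partial_*(\phi+\widetilde{\phi})(\bm x)\subseteq\partial_*\phi(\bm x)+\partial_*\widetilde{\phi}(\bm x)$ that the paper's equality step silently uses, and your Lipschitz-constant argument for the kernel sidesteps both differentiability and the attainment of the infimum, which the paper's ``there exists a representation with zero objective'' phrasing takes for granted.
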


\subsection{ReLU networks and PLDC functions}\label{appx:relu_pldc}
\subsubsection{Proof of Proposition \ref{prop:relu_subset_pldc}}
\begin{proof}

Recall that a fully connected neural network with ReLU activations, and $D$ layers with weight matrices $\bm W^1, \dots, \bm W^D$, is a function, here $f$, taking the form
\begin{align*}
    f(\bm x) &=  \sum_j w^{D+1}_j a^D_j \\
    a^{l+1}_i &= \max (\sum_{j} w^{l+1}_{i,j}  a^{l}_j, 0), \quad D > l\geq 1\\
    a^1_i &= \max (\sum_{j}w^1_{i,j} x_j, 0),
\end{align*}

Our proof of the statement is constructed by induction over the layers of the network, using the following relations:
\begin{align}
    \max(a,b) + \max(c,d) &= \max (a+b,a+d, b+c, b+d). \label{equ:sum_max} \\
    \max(\max(a,0)-\max(b,0),0) &= \max(a,b,0) - \max(b,0) \nonumber\\
    &=\max_{z\in\{0,1\}, t\in\{0,1\}}(tz a +t(1\!-\!z)b) - \max_{t\in\{0,1\}} tb. \label{equ:max_max}
\end{align}
We will prove each node $a^{l}_i$ is a DC function of the form:
\begin{align*}
    a_i^l &= \max_{k\in[K_l]} \langle \bm\alpha^l_{i,k}, \bm x \rangle - \max_{k\in[K_l]} \langle \bm\beta^l_{i,k}, \bm x \rangle
\end{align*}
such that 
\begin{align*}
    \max_k(\|\bm\alpha^{l+1}_{i,k}\| , \|\bm \beta^{l+1}_{i,k}\|) \leq  \sum_{j} |w_{i,j}| \max_k(\|\bm\alpha^l_{j,k}\| , \|\bm\beta^l_{j,k}\|)
\end{align*}
The base of induction is trivial by looking at the definition for $a_i^1$ which is max of linear terms and therefore convex.
Now we assume the claim holds $a_j^{l}$ for all $j$, we induce the results for $a_i^{l+1}$. Assume layer $l$ has $h$ hidden units. For clarity we drop the index $l$ from $w^{l+1}_j$, $a^l_j$ and $K_l$. We further define $s \triangleq a_i^{l+1}$.
\begin{align*}
    s &= \max\bigg(\sum_{j\in[h]} w_{j} a_j, 0\bigg)\\
    &= \max \bigg(\sum_{j\in[h]} w_{j}^+ a_j - \sum_{j\in[h]} -w_{j} ^- a_j, 0 \bigg)\\
    &= \max \bigg(\sum_{j\in[h]}  \max_{k\in[K]} \Big \langle w_{j}^+ \bm\alpha_{j,k}, \bm x \Big \rangle + \max_{k\in[K]} \Big \langle -w_{j}^-\bm \beta_{j,k}, \bm x \Big \rangle - \sum_{j\in[h]} \max_{k\in[K]}\Big  \langle -w_{j}^- \bm \alpha_{j,k}, \bm x\Big  \rangle - \max_{k\in[K]} \Big \langle w_{j}^+ \bm\beta_{j,k}, \bm x \Big \rangle , 0 \bigg)\\
    &\overset{(\ref{equ:sum_max})}{=} \max \bigg( \max_{\bm {k1},\bm {k2}\in[h]^K} \Big  \langle \sum_{j\in[h]} w_j^+\bm \alpha_{j,\bm {k1} (j)}-w_{j}^- \bm\beta_{j,\bm {k2} (j)}, \bm x \Big  \rangle -\max_{\bm {k1},\bm {k2}\in[h]^K} \Big  \langle \sum_{j\in[h]} -w_j^-\bm \alpha_{j,\bm {k1}(j)}+w_{j}^+ \bm\beta_{j,\bm {k2}(j)}, \bm x \Big \rangle ,0\bigg)\\
    &\overset{(\ref{equ:max_max})}{=}  \max_{\bm {k1},\bm {k2}\in[h]^K,z\in\{0,1\},t\in\{0,1\}} \Big \langle t \sum_{j\in[h]} z(w_j^+ \bm \alpha_{j,\bm {k1}(j)}- w_{j}^-   \bm\beta_{j,\bm {k2}(j)})  + (1-z)(-w_j^-\bm \alpha_{j,\bm {k1}(j)}+w_{j}^+\bm \beta_{j,\bm {k2}(j)}), \bm x \Big  \rangle \\
    &\qquad -\max_{\bm {k1},\bm {k2}\in[h]^K,t\in\{0,1\}} \Big   \langle t\sum_{j\in[h]} -w_j^-\bm \alpha_{j,\bm {k1} (j)}+w_{j}^+ \bm\beta_{j,\bm {k2} (j)}, \bm x \Big  \rangle 
\end{align*}
Now consider $z=1$ and note that for each term of $max$ function:
\begin{align*}
    \|\sum_{j\in[h]} w_j^+  \bm\alpha_{j,\bm {k1}(j)}- w_{j}^-   \bm\beta_{j,\bm {k2}(j)}\|  &\leq \sum_{j\in[h]} |w_j| \max (\|\bm\alpha_{j,\bm {k1}(j)}\|, \|\bm\beta_{j,\bm {k2}(j)}\|)\\
    &\leq \sum_{j\in[h]} |w_j| \max_{k} (\|\bm\alpha_{j,k}\|, \|\bm\beta_{j,k}\|)
\end{align*}
The proof is analogous for $z=0$.
\end{proof}

\subsubsection{Proof of Proposition \ref{prop:pldc_subset_relu} }

\begin{proof}
        
        For convenience, we extend the input dimension by $1$ and append a constant $1$ to the end, so that the effecive inputs are $(\bm x, 1).$ This allows us to avoid writing constants in the following.

        Let the PLDC function in question be \[ f(\bm x) = \max_{k \in [K]} \langle \bm a_k, \bm x \rangle  - \max_{k \in [K]} \langle \bm b_k, \bm x\rangle .\]
        
        We give a construction below to compute $\max_{k \in [K]} \langle \bm a_k , \bm x\rangle$ via a ReLU net that has at most $2\lceil \log_2 K\rceil $ hidden layers, and width of at most $\le 2K$ \emph{if $K$ is a power of $2$}. Note that repeating this construction parallelly with the $\bm a$s replaced by $\bm b$s would then constitute the required construction.
        
        By adding dummy values of $\bm a_k, \bm b_k$ (i.e., equal to $0$), we may increase the $K$ to a power of two without affecting the depth in the claim. The resulting width will be bounded as $2 \cdot 2^{\lceil \log_2 K \rceil} \le 4K,$ and will yield the bound claimed. This avoids us having to introduce further dummy nodes later, and simplifies the argument.
        
        To begin with, we set the first hidden layer to be of $2K$ nodes, labelled $(k,+)$ and $(k,-)$ for $k \in [K]$, with the outputs \begin{align*} h^1_{k,+} &= \max(\langle \bm a_k ,\bm x\rangle, 0) \\ h^1_{k,-} &= \max(\langle -\bm a_k ,\bm x\rangle, 0)\end{align*}
        
        Thus, the outputs of the first layer encode the inner products $\langle \bm a_k, \bm x\rangle$ in pairs. In the next layer, we we collate two $\pm$ pairs of nodes into $3$ nodes, indexed as $(j,0), (j,1), (j,0)$ for $j \in [K]$ such that the outputs are \begin{align*}
            h^2_{j,0} &= \max( h^1_{2j-1, +} - h^{1}_{2j-1, -} -h^1_{2j, +} + h^1_{2j, -} , 0) = \max( \langle \bm a_{2j-1} - \bm a_{2j}, \bm x \rangle, 0)\\
            h^2_{j,1} &= \max( h^1_{2j, +} - h^1_{2j, -},0) = \max(\langle \bm a_{2j} , \bm x\rangle,0) \\
            h^2_{j,1} &= \max( -h^1_{2j, +} + h^1_{2j, -},0) = \max(\langle -\bm a_{2j} , \bm x\rangle,0)
        \end{align*}
        
        In the next hidden layer, we may merge these three nodes for each $j$ into two, giving a total width of $\le K + 1,$ represented as $(\ell, +)$ and $(\ell, -)$. This utilises the simple relation \[ \max(a,b) = \max(a-b,0) + \max(b,0) - \max(-b,0), \] easily shown by some casework. Let us define the outputs \begin{align*}
            h^3_{\ell, +} &= \max(h^2_{\ell, 0} + h^2_{\ell, 1} - h^2_{\ell, 2}, 0) = \max( \max(\langle \bm a_{2\ell - 1}, \bm x \rangle, \langle \bm a_{2\ell}, \bm x \rangle), 0)  \\
            h^3_{\ell, -} &= \max(-h^2_{\ell, 0} - h^2_{\ell, 1} + h^2_{\ell, 2}, 0) = \max(- \max( \langle \bm a_{2\ell - 1}, \bm x \rangle,  \langle \bm a_{2\ell}, \bm x \rangle), 0)
        \end{align*}
        
        But now note that the outputs of the third hidden layer are analogous to those of the first hidden layer, but with some of the maximisations merged, and thus the circuit above can be iterated. This is exploiting the iterative property that $\max(a,b,c,d) = \max( \max(a,b), \max(c,d))$. Thus, we may apply this construction $\log_2K$ times in total, reducing the width of every odd numbered hidden layered by $2$ each time, and using two hidden layers for each step. At the final step, by the iterative property of $\max$, we will have access to a 2-node hidden layer with outputs $\max( \max_{k \in [K]} \langle \bm a_k , \bm x \rangle , 0 )$ and $\max( -\max_{k \in [K]} \langle \bm a_k , \bm x \rangle , 0 ),$ and the final layer may simply report the difference of these, with no nonlinearity.
        
        We note that more efficient constructions are possible, but are harder to explain. In particular, the above construction can be condensed into one with $\le 3K$ width and $\lceil \log_2 K \rceil$ depth.
\end{proof}

%\section{Proofs Omitted from \S\ref{sec:analysis}}
% \newpage

\section{Proof of Theorem \ref{thm:srm_bound}}\label{appx:pf_of_srm_bound}

For succinctness, we let $\dc_L^M := \dc_L \cap \{ \sup_{ \bm x \in \Omega} |f(\bm x) \le M\}$. We note that this section makes extensive use of the work of \citet{bartlett2002rademacher}, and assumes some familiarity with the arguments presented there. We begin a restricted version of Theorem \ref{thm:srm_bound}.

\begin{lemma}\label{lem:srm_tech_basic}
    Given $n$ i.i.d.~ samples $S = \{ (\bm x_i, y_i)\}$ from data $y_i = f(\bm x_i) + \varepsilon_i$, where both $f$ and $\varepsilon$ are bounded by $M$, with probability at least $1-\delta,$ it holds uniformly over all $f \in \dc_L^M$ that \[ |\mathbb{E}[  (f(\bm x)- y)^2] - \hat{\mathbb{E}}_S[ (f(\bm x)- y)^2]| + 12M  \hat{D}_n(\dc_L)  + 45M^2 \sqrt{\frac{4 + 8 \log(2/\delta)}{n}} \]
\end{lemma}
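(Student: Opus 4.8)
\textbf{Proof plan for Lemma~\ref{lem:srm_tech_basic}.}
The plan is to obtain the claimed uniform concentration inequality through the standard symmetrization/Rademacher route, applied to the class of squared-loss functions $\ell_f(\bm x, y) \triangleq (f(\bm x) - y)^2$ as $f$ ranges over $\dc_L^M$. First I would observe that since $|f| \le M$ and $|y| \le 2M$, each $\ell_f$ is bounded in $[0, 9M^2]$, so McDiarmid's inequality applied to $\sup_{f \in \dc_L^M} |\mathbb{E} \ell_f - \hat{\mathbb{E}}_S \ell_f|$ yields, with probability $\ge 1 - \delta$, a deviation of the supremum from its mean by at most $9M^2\sqrt{2\log(2/\delta)/n}$ (the $2/\delta$ coming from handling both signs of the absolute value, or alternatively a one-sided bound suffices for the SRM application). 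Next, by the usual symmetrization lemma, $\mathbb{E}\sup_{f}|\mathbb{E}\ell_f - \hat{\mathbb{E}}_S\ell_f| \le 2\mathbb{E}\,\widehat{\mathcal{R}}_n(\{\ell_f : f \in \dc_L^M\})$, the expected Rademacher complexity of the loss class.

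The second step is to strip the loss and pass to the Rademacher complexity of $\dc_L^M$ itself. Since $t \mapsto (t - y)^2$ is Lipschitz with constant $6M$ on the relevant range $|t| \le M$, $|y| \le 2M$ (as $|t - y| \le 3M$), the Ledoux--Talagrand contraction principle gives $\widehat{\mathcal{R}}_n(\{\ell_f\}) \le 6M \cdot \widehat{\mathcal{R}}_n(\dc_L^M) \le 6M \cdot \widehat{\mathcal{R}}_n(\dc_L)$. The key remaining point is to relate the Rademacher complexity to the empirical maximum discrepancy $\hat{D}_n(\dc_L)$. Here I would use the fact that $\dc_L$ is symmetric (closed under negation, since if $f = \phi_1 - \phi_2$ then $-f = \phi_2 - \phi_1$ with the same seminorm) and, crucially, closed under constant shifts with unchanged seminorm; this lets one replace the Rademacher averages $\mathbb{E}_\sigma \sup_f \frac{1}{n}\sum \sigma_i f(\bm x_i)$ by the discrepancy-type quantity — the standard argument (cf. \citet{bartlett2002rademacher}) shows $\mathbb{E}_\sigma \widehat{\mathcal{R}}_n(\dc_L) \le \hat{D}_n(\dc_L)$ up to the constant, by conditioning on the random partition induced by the signs and using that a $\pm$-balanced partition is the worst case after symmetrizing; combining the constants $2 \times 6M = 12M$ recovers the leading term $12M\,\hat{D}_n(\dc_L)$.

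Finally I would assemble: with probability $\ge 1-\delta$,
\begin{align*}
\left|\mathbb{E}\ell_f - \hat{\mathbb{E}}_S\ell_f\right| &\le 12M\,\hat{D}_n(\dc_L) + 9M^2\sqrt{\frac{2\log(2/\delta)}{n}},
\end{align*}
and then loosen the second term to the stated $45M^2\sqrt{(4 + 8\log(2/\delta))/n}$, which dominates it (note $45\sqrt{4} = 90 \ge 9\sqrt{2}$ already handles the constant, and the extra $+4$ under the root absorbs the $\mathbb{E}[\sup]$-versus-$\sup$ slack if one prefers to bound the expected discrepancy by the empirical one via another McDiarmid step on $\hat{D}_n$). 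I expect the main obstacle to be the third step — carefully justifying the passage from the symmetric Rademacher average to the empirical maximum discrepancy with honest constants — since this is where the structural properties of $\dc_L$ (symmetry, shift-invariance of the seminorm, and the $2/n$ normalization built into $\hat{D}_n$) must be used exactly, rather than the routine boundedness and contraction arguments of the other steps. A secondary subtlety is bookkeeping the evenness of $n$ assumption so that the $n/2$--$n/2$ split defining $\hat{D}_n$ matches the partition arising from symmetrization.
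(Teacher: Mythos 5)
Your plan is correct and follows essentially the same route as the paper's proof, which simply invokes the packaged results of \citet{bartlett2002rademacher} (their Theorem 8 for the McDiarmid-plus-symmetrization step, Theorem 12 part 4 for the $6M$-Lipschitz contraction giving the factor $12M$, and Lemma 3 for passing from Rademacher complexity to the empirical maximum discrepancy with an additional $O(M\sqrt{\log(1/\delta)/n})$ concentration term) rather than re-deriving them. The one step you flag as the main obstacle --- relating the Rademacher average to $\hat{D}_n(\dc_L)$ --- is exactly what the paper outsources to that Lemma 3, with the same slack you anticipate absorbed into the $45M^2\sqrt{(4+8\log(2/\delta))/n}$ term via a union bound over the two high-probability events.
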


We prove this lemma after the main text, and first show how this is utilised to produce the bound of Theorem \ref{thm:srm_bound}. The key insight is that the classes $\dc_L^M$ are nested, in that $\dc_L^M \subset \dc_{L'}^M$ for any $L \le L'$. This allows a doubling trick over the $L$s to bootstrap the above bound to all of $\dc^M = \bigcup_{L \ge 1} \dc_L^M$. The technique is standard, see, e.g., Thm. 26.14 of \cite{shalev2014understanding}.

For naturals $j \ge 1,$ let $L_j = 2^j$ and $\delta_j = \delta 2^{-j} = \delta/L_j$. With probability at least $1-\delta_j,$ it holds for all $f \in \dc_{L_j}^M\setminus \dc_{L_{j-1}}^M$ simultaneously that \begin{align*}
    |\mathbb{E}[  (f(\bm x)- y)^2] - \hat{\mathbb{E}}_S[ (f(\bm x)- y)^2]| \le 12M  \hat{D}_n(\dc_{L_j})  + 45M^2 \sqrt{\frac{4 + 8\log L_j + 8 \log(2/\delta)}{n}}
\end{align*} 

since $\sum \delta_j = \delta,$ the union bound then extends this bound to all DC functions, whilst allowing us to use the appropriate value of $j$ as desired. Taking $j_f = \max(1,\lceil \log_2  \| f\| \rceil)$, we find that for any $f \in \dc^M,$ \begin{align*} &| \mathbb{E}[  (f(\bm x)- y)^2] - \hat{\mathbb{E}}_S[ (f(\bm x)- y)^2] | \\ &\le  12M \hat{D}_n(\dc_{2^{j_f}})  + 45M^2 \sqrt{\frac{4 + 8\log 2^{j_f} + 8 \log(2/\delta)}{n}} \\ &\le  12M \hat{D}_n(\dc_{2\|f\| + 2})  + 45M^2 \sqrt{\frac{4 + 8 \ln(2) \max( 1, \log_2 \|f\|) + 8 \log(2/\delta)}{n}} \end{align*}

\begin{proof}[Proof of Lemma \ref{lem:srm_tech_basic}]
        
We first recall Theorem 8 of \cite{bartlett2002rademacher}, which says that if $\ell$ is a loss function that is uniformly bounded by $B$, then for any class of functions $\mathcal{F}$ and an iid sample $S = \{ (\bm x_i, y_i)\}$ of size $n$, with probability at least $1-\delta/2$, it holds uniformly over all $f \in \mathcal{F}$ that \[ \mathbb{E}[ \ell( f(\bm x), y)] \le \hat{\mathbb{E}}_S[ \ell(f(\bm x), y)] + \Re_n( \ell \circ \mathcal{F}) + B \sqrt{\frac{8 \log(4/\delta)}{n}},\] where $\Re_n(\ell \circ \mathcal{F})$ is the Rademacher complexity of the class $\{ \ell(f(\bm x) , y) - \ell (0,y)\}$. 

In our case we are interested in $\mathcal{F} = \dc_L^M$ and $\ell = (\cdot - \cdot)^2$. Since $|f| \le M$ and $|y| \le 2M,$ we may take $B = 9M^2$.  

Next, we use properties of Theorem 12, part 4 of \cite{bartlett2002rademacher}, which says that for $L$-Lipschitz $\ell$ which takes the value $\ell(0,0) = 0,$ $\Re_n( \ell \circ \mathcal{F}) \le 2L \Re_n(\mathcal{F})$. Since our assumptions enforce that the argument to the squared loss is bounded by $3M,$ we note that the loss is $6M$-Lipschitz on this class. Thus, we conclude that with $n$ samples, uniformly over all $f \in \dc_L^M$ \[ \mathbb{E}[  (f(\bm x)- y)^2] \le \hat{\mathbb{E}}_S[ (f(\bm x)- y)^2] + 12M \Re_n( \dc_L^M) + 9M^2 \sqrt{\frac{8 \log(4/\delta)}{n}}.\]

Next we utilise Lemma 3 of \cite{bartlett2002rademacher}, and that $\dc_L^M \subset \dc_L$ to conclude that with probability at least $1-\delta/2$, \[ \Re_n(\dc_L^M) \le \hat{D}_n(\dc_L^M) + 3M \sqrt{\frac{2}{n}} \le \hat{D}_n(\dc_L) + 3M \sqrt{\frac{ 4 +  8 \log(4/\delta))}{n}}. \]

The claim then follows by using the union bound.
\end{proof}

\section{Proof of Lemma \ref{lem:discrepancy_dc_to_c}}\label{appx:pf_disc_lemma}

\begin{proof}
        For any function $f$, let \[ \Delta(f) \triangleq \sum_{i= 1}^{n/2} f(\bm x_i) - \sum_{i = n/2+1}^{n} f(\bm x_i).\]
        Note that $\Delta(f+g) = \Delta(f) + \Delta(g)$. Then
        \begin{align*} 
            \hat{D}_n(\dc_{L,0}) &= \sup_{f \in \dc_{L,0}} \Delta(f) \\
            &= \sup_{\substack{f \in \dc_{L,0} \\ \phi_1,\phi_2 \in \mathcal{C}_{L,LR} \\ \phi_1 - \phi_2 = f}} \Delta(\phi_1) - \Delta(\phi_2) \\
            &\le \sup_{\phi_1,\phi_2 \in \mathcal{C}_{L,LR}} \Delta(\phi_1) - \Delta(\phi_2) \\
            &\le \sup_{\phi_1 \in \mathcal{C}_{L,LR}} \Delta(\phi_1) + \sup_{\phi_2 \in \mathcal{C}_{L,LR}} \Delta(\phi_2) \\
            &= 2\hat{D}_n(\mathcal{C}_{L,LR})\end{align*}
        where the first inequality holds because of the representation of $\dc_{L,0}$ as a difference of $\mathcal{C}_{L,LR}$ functions, as discussed in the main text, and the second inequality holds since the supremum is non-negative, as $0 \in \mathcal{C}_{L,LR}$.
    \end{proof}

\newpage
\section{Algorithms via ADMM}\label{appx:admm_algor}
\vspace{-0.2cm}
This section provides a parallel ADMM optimizer for the piece-wise linear regression problem.
\begin{algorithm*}[h!]
   \caption{Piecewise Linear Regression via a Difference of Convex Functions}
   \label{program:SRM_ADMM}
\begin{algorithmic}
   \STATE {\bfseries Input:} data $\{(\bm x_i, y_i)\,|\, \bm x_i \in \mathbb R^D, y_i \in \mathbb R\}_{i\in[n]},\, \rho = 0.01, \, \lambda, \, T$
   \STATE {\bfseries Initialize:} 
   \begin{align*}
     \hat y_i &= z_i  = s_{i,j} = t_{i,j} = \alpha_{i,j} = \beta_{i,j} = 0  & i,j\in[n]\\
    \bm L &=\bm a_i =\bm b_i = \bm p_i= \bm q_i = \bm u_{i} = \bm \gamma_{i} = \bm \eta_i = \bm \zeta_i = \bm 0_{D\times 1}, &  i\in[n]\\
    \Lambda_i &= \Big(n\bm x_i\bm x_i^T - \bm x_i (\sum_i \bm x_i)^T  - (\sum_i \bm x_i)\bm x_i^T + \sum_j\bm x_j\bm x_j^T + I \Big)^{-1}  &  i\in[n]
   \end{align*}
   \FOR{$t=1$ {\bfseries to} $T$}
   \STATE \begin{align*}
       A_i &=\sum_j\alpha_{j,i} - \alpha_{i,j} + s_{j,i} - s_{i,j} + \langle \bm a_i + \bm a_j, \bm x_i-\bm x_j\rangle + 2y_j & i \in [n]\\
        B_i &= \sum_j\beta_{j,i} - \beta_{i,j}+ t_{j,i} - t_{i,j} + \langle \bm b_i + \bm b_j, \bm x_i-\bm x_j\rangle & i\in [n]\\
        \hat y_i & = \frac{2}{2+n\rho}y_i + \frac{\rho}{2(2+n\rho)} A_i -  \frac{\rho}{2(2+n\rho)} B_i & i\in [n]\\
      z_i & = -\frac{1}{2+n\rho}y_i + \frac{1}{2n(2+n\rho)} A_i +  \frac{1+ n\rho}{2n(2+n\rho)} B_i & i\in [n]\\
         \bm a_i &= \Lambda_i( \bm p_{i} - \bm \eta_{i}+\sum_j (\alpha_{i,j} + s_{i,j} + \hat y_i -  \hat y_j + z_i - z_j)(\bm x_i - \bm x_j) ) & i\in [n] \\
    \bm b_i &=\Lambda_i( \bm q_{i} - \bm \zeta_{i}+\sum_j (\beta_{i,j} + t_{i,j} + z_i - z_j)(\bm x_i - \bm x_j) ) & i\in [n] \\
    L_d&= \frac{1}{n}(-\lambda_d/\rho + \sum_i\gamma_{i,d} + |p_{i,d}|+ |q_{i,d}|+u_{i,d}) & d\in[D] \\
    % C_i &= \sum_d |p_{i,d}| + |q_{i,d}| & i\in[n] \\
    % L&= \frac{1}{n}(-\lambda/\rho + \sum_i(\gamma_{i}+u_{i} +C_i)) \\
    p_{i,d}&=\frac{1}{2}\sign(\eta_{i,d} + a_{i,d})(|\eta_{i,d} + a_{i,d}|+L_d - u_{i,d} - |q_{i,d}| - \gamma_{i,d} )^+ & i\in[n],d\in[D] \\
        q_{i,d} &=\frac{1}{2}\sign(\zeta_{i,d} + b_{i,d})(|\zeta_{i,d} + b_{i,d}|+L_d - u_{i,d} - |p_{i,d}| - \gamma_{i,d} )^+  & i\in[n],d\in[D] \\
    % p_{i,d}&=\frac{1}{2}\sign(\eta_{i,d} + a_{i,d})(|\eta_{i,d} + a_{i,d}|+L - u_{i} - \gamma_{i} + |p_{i,d}| - C_i )^+ & i\in[n],d\in[D] \\
    % q_{i,d}&=\frac{1}{2}\sign(\eta_{i,d} + a_{i,d})(|\zeta_{i,d} + b_{i,d}|+L - u_{i} - \gamma_{i} + |q_{i,d}| - C_i )^+ & i\in[n],d\in[D] \\
    %  u_{i} & = (-\gamma_{i} + L - \sum_d |p_{i,d}| + |q_{i,d}|   )^+  & i\in[n]\\
    u_{i,d} & = (-\gamma_{i,d}- |p_{i,d}| - |q_{i,d}|  + L_d )^+  & i\in[n],d\in[D]\\
    % \gamma_{i} &= \gamma_{i} + u_{i} - L + \sum_d  |p_{i,d}| + |q_{i,d}|   & i\in[n] \\
        s_{i,j} &= (-\alpha_{i,j}- \hat y_i +  \hat y_j - z_i + z_j + \langle \bm a_i, \bm x_i-\bm x_j\rangle )^+  & i,j \in [n]  \\
     t_{i,j} &=  (-\beta_{i,j}- z_i + z_j + \langle \bm b_i, \bm x_i-\bm x_j\rangle )^+ & i,j \in [n]  \\
    \alpha_{i,j} &= \alpha_{i,j} + s_{i,j} + \hat y_i - \hat y_j + z_i - z_j - \langle \bm a_i, \bm x_i-\bm x_j\rangle & i,j \in [n] \\
    \beta_{i,j} &= \beta_{i,j} + t_{i,j} + z_i -  z_j - \langle \bm b_i, \bm x_i-\bm x_j\rangle & i,j \in [n] \\
    \gamma_{i,d} &= \gamma_{i,d} + u_{i,d} + |p_{i,d}| + |q_{i,d}|  - L_d  & i\in[n],d\in[D] \\
    \eta_{i,d} &=  \eta_{i,d} + a_{i,d}  -  p_{i,d} & i\in[n],d\in[D] \\
    \zeta_{i,d} &=  \zeta_{i,d} + b_{i,d}  -  q_{i,d} & i\in[n],d\in[D]
\end{align*}
    \ENDFOR
    \STATE {\bfseries Output:} 
       $\quad \hat{f}(\bm x) \triangleq  \max_{i\in [n]} \langle \bm a_i,  \bm x-\bm x_i \rangle + \hat y_i + z_i  -\max_{i\in[n]} \langle \bm b_i, \bm x-\bm x_i \rangle + z_i $
\end{algorithmic}
\end{algorithm*}

\subsection{Algorithm Derivation}

For the derivation of the ADMM algorithm, we start with a modified version of the empirical risk minimization in (\ref{program:SRM}); where we use a components wise regularization:
\begin{align*}
    \| f\| \triangleq \inf_{\phi_1,\phi_2}&\sum_{d=1}^D\sup_{\bm x} \sup_{ v_{i,d} \in \frac{\partial_*}{\partial {x_d}} \phi_i(\bm x)} | v_{1,d}| + |v_{2,d}| \\
    & \textrm{s.t. } \phi_1, \phi_2 \text{ are convex, } \phi_1 - \phi_2 = f,
\end{align*}
Hence the new empirical risk minimization becomes:
    \begin{align*}
        &\qquad \min_{\hat{y}_i, z_i, \bm a_i, \bm b_i, L} \sum_{i = 1}^n ( \hat{y_i} - y_i)^2 + \lambda \sum_{d=1}^D L_d\\
        &\textrm{s.t.} \begin{cases}
    \hat y_i - \hat y_j + z_i  - z_j\geq \langle \bm a_j, \bm x_i-\bm x_j\rangle & i,j \in [n] ,d\in[D]  \\
    z_i - z_j  \geq \langle \bm b_j, \bm x_i-\bm x_j \rangle    &  i,j \in [n] \\
    |a_{i,d}| + |b_{i,d}| \leq L_d & \quad i\in[n] ,d\in[D]
    \end{cases}\notag
    \end{align*}
First we change the inequality constraint into equality constraints. We further introduce some auxiliary variables to ease the solving of subsequent ADMM blocks. 
    \begin{align}\label{program:SRM_Admm_1}
        &\qquad \min_{\substack{\hat{y}_i, z_i, \bm a_i, \bm b_i, L\\ \bm p_i, \bm q_i, s_{i,j}, t_{i,j}, u_{i,d}}} \sum_{i = 1}^n  (\hat y_i - y_i)^2 + \lambda \sum_{d=1}^D L_d\\
        &\textrm{s.t.} \begin{cases}
     s_{i,j} + \hat y_i -  \hat y_j + z_i - z_j - \langle \bm a_i, \bm x_i-\bm x_j\rangle  = 0 & i,j \in [n]  \\
     t_{i,j} + z_i - z_j - \langle \bm b_i, \bm x_i-\bm x_j\rangle  = 0 & i,j \in [n]  \\
     |p_{i,d}| + |q_{i,d}| + u_{i,d}  - L_d = 0 & i\in[n],d\in[D] \\
    a_{i,d}  =  p_{i,d} & i\in[n],d\in[D]\\
    b_{i,d}  =  q_{i,d} & i\in[n],d\in[D]\\
    s_{i,j}, t_{i,j}, u_{i,d}  \geq 0 & i,j \in [n],d\in[D] \\
    \end{cases}\notag
    \end{align}
Then we write the augmented Lagrangian of ADMM as bellow. 
    \begin{align}\label{program:SRM_Admm_2}
        &\qquad \min \ell( \hat y_i, z_i, \bm a_i, \bm b_i ,\bm p_i, \bm q_i, L, s_{i,j}, t_{i,j},  \bm u_i,  \alpha_{i,j}, \beta_{i,j}, \bm \gamma_i, \bm \eta_i, \bm \zeta_i)\notag\\
        & =\quad \sum_{i = 1}^n  (\hat y_i - y_i)^2 +\lambda L \notag\\
        & + \quad \sum_{i}\sum_{j} \alpha_{i,j}\Big(s_{i,j} + \hat y_i - \hat y_j + z_i - z_j - \langle \bm a_i, \bm x_i-\bm x_j\rangle\Big) + \frac{\rho}{2}\Big(s_{i,j} + \hat y_i - \hat y_j + z_i - z_j - \langle \bm a_i, \bm x_i-\bm x_j\rangle\Big)^2 \notag\\
        &+ \quad \sum_{i}\sum_{j} \beta_{i,j}\Big(t_{i,j} + z_i -  z_j - \langle \bm b_i, \bm x_i-\bm x_j\rangle\Big) + \frac{\rho}{2}\Big(t_{i,j} + z_i -  z_j - \langle \bm b_i, \bm x_i-\bm x_j\rangle\Big)^2 \notag\\
         &+ \quad \sum_i\sum_d\gamma_{i,d}\Big (u_{i,d} - L_d + |p_{i,d}| + |q_{i,d}| \Big) + \frac{\rho}{2} \Big(u_{i,d} - L_d +  |p_{i,d}| + |q_{i,d}| \Big)^2 \notag\\
        &+ \quad\sum_{i}\sum_{d} \eta_{i,d}\Big(a_{i,d}  -  p_{i,d}\Big) + \frac{\rho}{2}\Big(a_{i,d}  -  p_{i,d}\Big)^2\notag\\
        &+ \quad \sum_{i}\sum_{d} \zeta_{i,d}\Big(b_{i,d}  -  q_{i,d}\Big) + \frac{\rho}{2}\Big(b_{i,d}  -  q_{i,d}\Big)^2\notag
    \end{align}
  Taking gradients with respect to the  augmented Lagrangian and solving for the roots provides the parallel algorithm in Appx~\ref{appx:admm_algor}. 

\newpage
\section{Datasets}
\begin{table}[h!]
\caption{Datasets used for the regression task. The entries in the first columns are linked to repositry copies of the same. The final column indicates if the DC function based method outperforms \emph{all} competing methods or not.}
\label{sample-table-reg}
\vskip 0.15in
\begin{center}
\begin{small}
\begin{sc}
\begin{tabular}{lcccr}
\toprule
Data set & Labels & Features & Did DC do Better? \\
\midrule

\href{http://www.stat.cmu.edu/~larry/all-of-nonpar/data.html}{Rock Samples}          & 16    & 3     & \cmark\\
\href{https://www.mathworks.com/help/stats/sample-data-sets.html}{Acetylene}          & 16    & 3     & \cmark\\
\href{https://www.mathworks.com/help/stats/sample-data-sets.html}{Moore}      & 20   & 5     & \cmark \\
\href{https://www.mathworks.com/help/stats/sample-data-sets.html}{Reaction}         & 13   & 3     & \cmark \\
\href{https://www.mathworks.com/help/stats/sample-data-sets.html}{Car small}                       &100   & 6    & \xmark \\
\href{https://www.mathworks.com/help/stats/sample-data-sets.html}{Cereal}                     & 77   & 12     &   \cmark      \\
\href{http://lib.stat.cmu.edu/datasets/boston}{Boston Housing}     & 506   & 13    & \xmark \\
\href{https://archive.ics.uci.edu/ml/datasets/Forest+Fires}{Forest Fires}     & 517   & 12    & \cmark \\
\bottomrule
\end{tabular}
\end{sc}
\end{small}
\end{center}
\vskip -0.1in
\end{table}
  
    \begin{table}[h]
\caption{Datasets used for the multi-class classification task. The entries in the first columns are linked to the UCI machine learning repositry copies of the same. The final column indicates if the DC function based method outperforms \emph{all} competing methods or not.}
\label{sample-table-class}
\vskip 0.15in
\begin{center}
\begin{small}
\begin{sc}
\begin{tabular}{lcccr}
\toprule
Data set & Labels & Features & Did DC do Better? \\
\midrule

\href{http://www.stat.cmu.edu/~larry/all-of-nonpar/data.html}{BPD}               & 223   & 1 & \cmark \\
\href{https://archive.ics.uci.edu/ml/datasets/Caesarian+Section+Classification+Dataset}{Iris}               & 150   & 4 & \cmark \\
\href{http://archive.ics.uci.edu/ml/datasets/balance+scale}{Balance Scale}      & 625   & 4     & \cmark \\
\href{https://archive.ics.uci.edu/ml/datasets/ecoli}{Ecoli}                     & 337   & 7     & \cmark       \\
\href{https://archive.ics.uci.edu/ml/datasets/Wine}{Wine}                       & 178   & 13    & \xmark \\
\href{https://archive.ics.uci.edu/ml/datasets/Heart+Disease}{Heart Disease}     & 313   & 13    & \xmark \\
\href{https://archive.ics.uci.edu/ml/datasets/ionosphere}{Ionosphere}           & 351   & 34    & \cmark \\
\bottomrule
\end{tabular}
\end{sc}
\end{small}
\end{center}
\vskip -0.1in
\end{table}

\end{appendix}

% This document was modified from the file originally made available by
% Pat Langley and Andrea Danyluk for ICML-2K. This version was created
% by Iain Murray in 2018, and modified by Alexandre Bouchard in
% 2019. Previous contributors include Dan Roy, Lise Getoor and Tobias
% Scheffer, which was slightly modified from the 2010 version by
% Thorsten Joachims & Johannes Fuernkranz, slightly modified from the
% 2009 version by Kiri Wagstaff and Sam Roweis's 2008 version, which is
% slightly modified from Prasad Tadepalli's 2007 version which is a
% lightly changed version of the previous year's version by Andrew
% Moore, which was in turn edited from those of Kristian Kersting and
% Codrina Lauth. Alex Smola contributed to the algorithmic style files.

\end{document}